\def\eqref#1{equation~\ref{#1}}
\def\1{\bm{1}}
\def\vtheta{{\bm{\theta}}}
\def\va{{\bm{a}}}
\def\vg{{\bm{g}}}
\def\vh{{\bm{h}}}
\def\vp{{\bm{p}}}
\def\vs{{\bm{s}}}
\def\vt{{\bm{t}}}
\def\vv{{\bm{v}}}
\def\vx{{\bm{x}}}
\def\vy{{\bm{y}}}
\def\mA{{\bm{A}}}
\def\mC{{\bm{C}}}
\def\mF{{\bm{F}}}
\def\mH{{\bm{H}}}
\def\mI{{\bm{I}}}
\def\mJ{{\bm{J}}}
\def\mL{{\bm{L}}}
\def\mP{{\bm{P}}}
\def\mR{{\bm{R}}}
\def\mS{{\bm{S}}}
\def\mV{{\bm{V}}}
\def\mW{{\bm{W}}}
\DeclareMathAlphabet{\mathsfit}{\encodingdefault}{\sfdefault}{m}{sl}
\SetMathAlphabet{\mathsfit}{bold}{\encodingdefault}{\sfdefault}{bx}{n}
\newcommand{\softmax}{\mathrm{softmax}}
\DeclareMathOperator*{\argmax}{arg\,max}
\DeclareMathOperator{\Tr}{Tr}
\newcommand{\where}{\textbf{where }}
\newcommand{\when}{\textbf{when }}
\newcommand{\When}{\textbf{When }}
\newcommand{\what}{\textbf{what }}
\newcommand{\mybox}[2]{
\begin{tcolorbox}[colback=blue!5!white, colframe=blue!75!black]
\paragraph{#1}
#2
\end{tcolorbox}
}
\newcommand{\expect}[1]{\mathbb{E} \left [ #1 \right ]}
\newcommand{\inv}[1]{{#1}^{-1}}
\theoremstyle{plain}
\newtheorem{theorem}{Theorem}[section]
\newtheorem{lemma}[theorem]{Lemma}
\newtheorem{corollary}[theorem]{Corollary}
\theoremstyle{definition}
\theoremstyle{remark}
\icmltitlerunning{Self-Expanding Neural Networks}
\begin{document}

\twocolumn[
\vspace*{-1em}
\icmltitle{Self-Expanding Neural Networks}
\icmlsetsymbol{equal}{*}

\author{%
  Rupert Mitchell$^1$ \quad Martin Mundt$^{1,2}$ \quad Kristian Kersting$^{1,2,3,4}$\\
  $^1$Department of Computer Science, TU Darmstadt, Darmstadt, Germany \\
  $^2$Hessian Center for AI (hessian.AI), Darmstadt, Germany \\
  $^3$German Research Center for Artificial Intelligence (DFKI), Darmstadt, Germany \\
  $^4$Centre for Cognitive Science, TU Darmstadt, Darmstadt, Germany \\
  \texttt{\{rupert.mitchell,martin.mundt,kersting\}@cs.tu-darmstadt.de} \\
}

\begin{icmlauthorlist}
\icmlauthor{Rupert Mitchell}{1,2}
\icmlauthor{Robin Menzenbach}{1}
\icmlauthor{Kristian Kersting}{1,2,3,4}
\icmlauthor{Martin Mundt}{1,2}
\icmlauthor{\normalfont $^1$Department of Computer Science, TU Darmstadt, Darmstadt, Germany}{} 
\icmlauthor{\normalfont $^2$Hessian Center for AI (hessian.AI), Darmstadt, Germany}{}
\icmlauthor{\normalfont $^3$German Research Center for Artificial Intelligence (DFKI), Darmstadt, Germany}{}
\icmlauthor{\normalfont $^4$Centre for Cognitive Science, TU Darmstadt, Darmstadt, Germany}{}
\end{icmlauthorlist}

\icmlcorrespondingauthor{Rupert Mitchell}{rupert.mitchell@tu-darmstadt.de}

\icmlkeywords{Neural Networks, Dynamic Representational Capacity}

\vskip 0.3in
]




\begin{abstract}
The results of
training a neural network are heavily dependent on the architecture chosen;
and even a modification of only its size,
however small,
typically involves restarting
the training process.
In contrast to this,
we begin training with a small architecture,
only increase its capacity as necessary
for the problem,
and avoid
interfering with
previous optimization
while doing so.
We thereby introduce
a natural gradient based approach
which intuitively expands both the width and depth of a neural network
when this is likely to substantially reduce the hypothetical converged training loss.
We prove an upper bound on the ``rate'' at which neurons are added,
and a computationally cheap lower bound on the expansion score.
We illustrate the benefits of such Self-Expanding Neural Networks with full connectivity and convolutions in both classification and regression problems, 
including those where the appropriate architecture size is substantially uncertain a priori.
\end{abstract}

\section{Introduction}
Correctly tailoring a model's capacity to an arbitrary task is extremely challenging, especially when the latter is not yet well studied.
This challenge can be side stepped by choosing an architecture which is so large that a poor solution is nevertheless unlikely to occur \cite{DBLP:conf/iclr/NakkiranKBYBS20}, e.g. due to the double-descent phenomenon. However, since it is hard to predict what size would be large enough this will often in practice entail using a massively overparameterized neural network (NN)  \cite{DBLP:conf/cvpr/SzegedyLJSRAEVR15,DBLP:journals/cacm/KrizhevskySH17,DBLP:conf/nips/HuangCBFCCLNLWC19}.
Surely it is possible to detect that the existing capacity of the network is insufficient and add more neurons when and where they are needed?
In fact, biological NNs are grown by adding new neurons to the existing network through neurogenesis. 
The popular review \cite{Gross2000} discusses the relatively recent discovery that this process is still active in the adult mammalian brain
\cite{Vadodaria2014},
and \cite{Kudithipudi2022,Draelos2017} identify it as a key ability underpinning lifelong learning.
Thus inspired,
we propose an analogous process for adding both neurons and layers to an artificial NN during training, 
based on a local notion of ``sufficient capacity'' derived from first principles in close relation to the natural gradient \cite{DBLP:journals/neco/Amari98,DBLP:journals/jmlr/Martens20}. 

Although conceptually intuitive, any method for artificial neurogenesis
must answer three challenging questions to avoid the problem of locally insufficient capacity \cite{DBLP:conf/iclr/EvciMUPV22}.
It must determine \textbf{when} the current capacity is insufficient and that neuron(s) must therefore be added.
It must identify \textbf{where} these neurons should be introduced.
Finally, it must choose \textbf{what} initialization is appropriate for these neurons.
These questions, if they are addressed at all in the literature, are normally addressed piecemeal or in ad-hoc ways. For example, very few methods address the question of \textbf{what} \cite{DBLP:conf/iclr/EvciMUPV22,DBLP:conf/nips/WuLS020}.
\When is answered either by assuming predetermined schedules \cite{DBLP:conf/nips/WuLS020,DBLP:journals/corr/RusuRDSKKPH16},
or by waiting for the training loss to converge \cite{DBLP:conf/iclr/YoonYLH18} \cite{DBLP:conf/nips/WuW019},
neither of which are informative about \textbf{where}.
``Whenever you parry, hit, spring, ..., you must cut the enemy in the same movement.''\footnote{
Miyamoto Musashi, The Book of Five Rings (circa 1645)
}
Our metaphorical enemy is not a loss which is momentarily poor, or even one which is converging to a poor value:
it is a deficiency in our parameterization such that the optimizer cannot make progress.
We argue that by inspecting the degrees of freedom of the optimizer in function space,
one may not only strike faster in answer to \textbf{when}, but answer \textbf{where} and \textbf{what} in the same stroke.

\begin{table*}[t]
\vspace*{-0.5em}
\caption{Existing expansion methods' (lack of) answer to when, where, what and whether they consider depth. SENN is the only approach to provide a cohesive answer to all three questions based on natural gradients.}
\label{growing_table}
\begin{center}
\resizebox{.95\textwidth}{!}{%
\begin{tabular}{lllll}
\bf METHOD & \bf WHEN & \bf WHERE & \bf WHAT & \bf DEPTH?\\
\hline
Dynamic Node Creation \cite{doi:10.1080/09540098908915647} & converged loss & preset & random & No\\
Progressive NNs \cite{DBLP:journals/corr/RusuRDSKKPH16} & at new task & preset & random & No\\
Neurogenesis DL \cite{Draelos2017} & recon error & recon error & random & No\\
Dynamically Exp. NNs \cite{DBLP:conf/iclr/YoonYLH18} & converged loss & preset then prune & random & No\\
\hline
Net2Net \cite{net2net2016} & manual & everywhere & fixed & Yes \\
ActiveNAS \cite{DBLP:conf/nips/GeifmanE19} & end of training & argmax on presets & reinitialize & Yes\\
\hline
Splitting Steepest Descent \cite{DBLP:conf/nips/WuW019} & converged loss & loss reduction & loss reduction & No \\
Firefly Architecture Descent \cite{DBLP:conf/nips/WuLS020} & N epochs & vanilla gradient & loss reduction & No\\
GradMax \cite{DBLP:conf/iclr/EvciMUPV22} & future work & future work & vanilla gradient & No \\
\hline
Self-Expanding NNs (SENN, ours)         &natural gradient &natural gradient &natural gradient & Yes
\end{tabular}
}
\end{center}
\label{tab:growing_table}
\vspace*{-2.5em}
\end{table*}

From a mathematical perspective, these degrees of freedom available to the optimizer
are given by the image of the parameter space under the Jacobian,
and the derivative with respect to the loss in function space will not in general lie in this subspace.
It is however possible to project this derivative onto that subspace,
and the natural gradient, $\mF^{-1} \vg$,
is exactly the change in parameters which changes the function according to this projection.
In order to measure the size of that projection for a given parameterization,
we introduce the natural expansion score $\eta = \vg^T \mF^{-1} \vg$ in this work.
Specifically, the capacity of a neural network is locally insufficient when this score is small for the current parameterization. In turn, this allows us to formulate Self-Expanding Neural Networks, a new family of dynamic models where we 
add neurons \textbf{when} this substantially increases $\eta$, \textbf{where} they will maximally increase $\eta$, and choose \textbf{what} initialization to use for the new parameters according to how it increases $\eta$. To summarize, our \textbf{contributions} are:
\begin{enumerate}
    \item We introduce the \textit{natural expansion score} which measures the increase in rate of loss reduction under natural gradient descent when width or depth is added to a NN. 
    \item We show how such additions may be made during training
    without altering the function represented by the network.
    Our neurogenesis inspired Self-Expanding Neural Networks (SENN) thus avoid interfering with previous optimization or requiring restarts of training.
    \item We prove that SENN's number of simultaneously added neurons is bounded and introduce a computationally efficient provable lower bound to increases in natural expansion score resulting from additions.
    \item We demonstrate SENN's effectiveness for regression and classification, for fully-connected and convolutional  variants, and highlight how SENN yields stable architecture states through continuous expansion and even neuron pruning when perturbed on purpose.
\end{enumerate}

\section{Related Methods for Growing Neural Nets}
\label{sec:related_work}

The problem of adding nodes to NNs during training has been contemplated for over 30 years (e.g. Dynamic Node Creation \cite{doi:10.1080/09540098908915647}),
but remains substantially unsolved.
There does not seem to exist a unified answer to \textbf{when}, \textbf{where}, and \textbf{what},
as summarized in table \ref{tab:growing_table}.
Most methods cannot add depth and sideline at least one of these questions.

Inspired by neurogenesis like SENN, \citet{Draelos2017} examine the case of representational learning with stacked autoencoders,
where they exploit local reconstruction error to determine \when and \where to add neurons.
Due to their more general setting,
Dynamic Node Creation,
Progressive NNs \cite{DBLP:journals/corr/RusuRDSKKPH16} and Dynamically Expandable NNs \cite{DBLP:conf/iclr/YoonYLH18}
use simple training loss convergence or even task boundaries to answer \when, but must then fall back on ad-hoc preset decisions for \textbf{where}. 
All four methods freeze old neurons or continue training from their present values,
but randomly initialize new neurons in answer to \what.
While ActiveNAS \cite{DBLP:conf/nips/GeifmanE19} can add both width \emph{and} depth,
it does so by completely restarting training with a fresh initialization of the whole network after every modification. Similarly, Net2Net \cite{net2net2016} can be interpreted as a proof of concept that allows insertion of layers; however, \textbf{when} to add is decided manually and additions happen \textbf{everywhere} with fixed initialization.

\begin{figure*}[t!]
\begin{subfigure}{0.425\textwidth}
    \includegraphics[width=\textwidth]{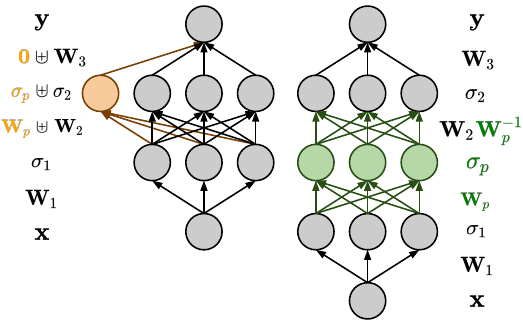}
    \caption{In the case of full connectivity (MLP), the model's component functions are composed vertically, e.g. $\mathbf{\sigma}_1(\mW_1 \vx)$ gives the first set of hidden activations.
    $\uplus$ indicates concatenation along a hidden dimension, i.e. width addition.
    $\mW_2 \mW_p^{-1}$ indicates matrix multiplication,
    used in depth expansion via the insertion of an identity function. \label{fig:addition_diagram_MLP}}
\end{subfigure}
\hfill
\begin{subfigure}{0.545\textwidth}
\includegraphics[width=\textwidth]{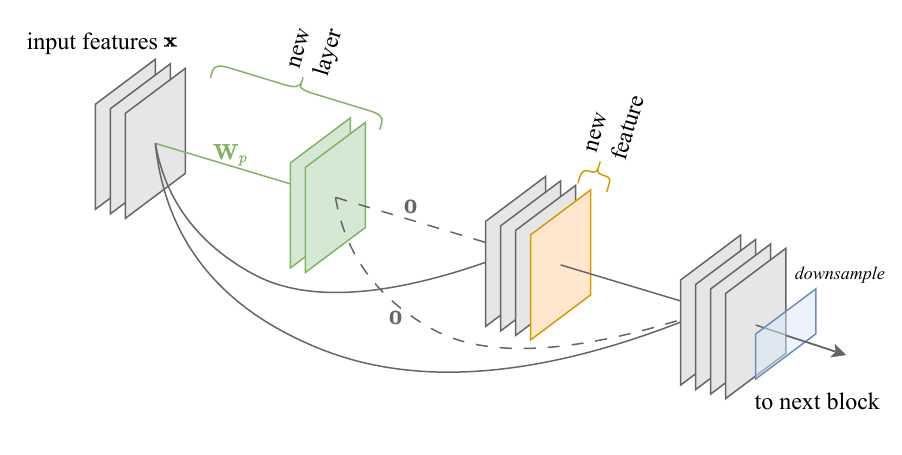}
\caption{In convolutional architectures, width addition follows the same principle as in the fully-conected MLP scenario (figure panel a), i.e. concatenation of hidden activations along the hidden dimension (with additional spatial dimensions). In contrast, insertion of depth no longer requires earlier constraints on activation functions or weight invertability, if skip connections are leveraged for the purpose of retaining the initial identity mapping. \label{fig:addition_diagram_CNN}}
\end{subfigure}
\caption{SENN can add width (orange) and depth (green) to a neural network without changing the overall function.}
\label{fig:addition_diagram}
\end{figure*}

The final cluster of three methods all aim to improve on random initialization as an answer to \textbf{what}.
Splitting Steepest Descent \cite{DBLP:conf/nips/WuW019} and Firefly \cite{DBLP:conf/nips/WuLS020} make small changes to the existing function and answer \what by optimizing the consequent loss reduction.
The former answers \when by waiting for convergence and examining the loss, whereas the latter simply adds more capacity every $N$ epochs.
GradMax \cite{DBLP:conf/iclr/EvciMUPV22} is the closest to SENN in spirit,
but is based on vanilla rather than natural gradient descent. More importantly, potential extensions of the method to the \where and \when questions are mentioned briefly and their investigation deferred to future work.
All three of these latter methods are only able to avoid redundancy of added neurons with existing neurons to the extent that the network is already converged. Of these three, only GradMax completely avoids changing the overall function.
Although a follow-up~\cite{maile2022} to Gradmax has made progress on remaining questions,
it comes with the trade-off of considering activations in place of gradients (discounting the objective),
with depth addition unaddressed.

In contrast, SENN provides a \textbf{monolithic answer} to \textbf{all three questions} via the \emph{natural expansion score} for both width and depth.

\newcommand{\useful}{\textit{useful}}
\newcommand{\leverage}{\textit{leverage}}
\newcommand{\range}{\textit{range}}
\newcommand{\redundant}{\textit{redundant}}
\newcommand{\ie}{\textit{i.e.}}
\newcommand{\eg}{\textit{e.g.}}

\section{Self-Expanding Neural Networks}
Consider introducing a new parameter $\theta$ to our network, initialized to zero, and ignore for now the presence of the other parameters.
Clearly, if we want to reduce the loss by changing our new parameter, the value of this parameter must affect the loss.
In particular, we will say our parameter has high \leverage{} if the derivative of the loss with respect to our parameter is large, \ie{} it receives a large gradient $g$.
The reduction in loss we can achieve with this new parameter is proportional to both the magnitude of this gradient and the distance to the local optimum with this parameter.
This highlights a second criterion we must consider: we will say our parameter has a large \range{} if the distance to the local optimum $\theta^* - \theta$ is large, \ie{} we can move this parameter a long way before it stops reducing the loss.
If we presume any interactions with other parameters to be accounted for, 
our ability to reduce the loss by changing some new parameter is thus captured by the product of its \leverage{} and \range.

\subsection{How to add: expanding the model architecture without changing the overall function}
\label{sec:how_to_add}
In order that learning is not disrupted it is important that when we add capacity to our network we do not thereby change the overall function it represents.
The simplest class of modifications satisfying this condition leave existing parameters unchanged and merely introduce new ones.
It is thus not necessary for new parameters to compensate for the loss of old ones, only for the new parameters to initially have no effect.
In all architectures, width addition via concatenation in the hidden dimension can be arranged to satisfy these conditions if the output weights of the new neuron are initialized to zero, since existing parameters may be left unchanged. It is similarly important to avoid changing the overall function when inserting a layer into a neural network,
but this may sometimes require modifying or replacing existing parameters rather than simple concatenation.
The core challenge is that existing information flow must be routed around the new layer unchanged, either via skip connections or some other identity map. 

Figure \ref{fig:addition_diagram_MLP} shows the case of width addition for an MLP in orange, where the linear transform $\mW_2$ of the second layer is expanded by concatenation to $\mW_p \uplus \mW_2$.
So long as the subsequent transform is expanded with zeros to form $\textbf{0} \uplus \mW_3$, $\mW_p$ may be arbitrary and leave the function unchanged.
While $\mW_p$ will thereby receive zero gradients, the newly introduced zero weights will not and will therefore not remain zero for long.
Layer insertion for MLPs is shown in green:
a linear transform $\mW_2$ is replaced by $(\mW_2 \inv{\mW}_q) \circ (\sigma_q = \mI) \circ \mW_q$ where $\mW_q$ must be invertible, but is otherwise arbitrary.
In this case we further require that our nonlinearity $\sigma_q$ is parameterized and for some choice of parameters can represent the identity at initialization.

Figure \ref{fig:addition_diagram_CNN} shows width and depth addition for a DenseNet CNN block in orange and green respectively.
As before, width is added via concatenation in the feature dimension and the initialization of output weights to zero.
Notably, and conveniently for CNNs in particular, we neither require invertibility of the new parameters, nor do we require the nonlinearity to be parameterized, if skip connections are present. Layer insertion is then more straightforward and follows similar principles to width addition: old direct connections become skip connections, and output weights of the new layer are initialized to zero with input weights arbitrary.
We thus have the first ingredient of SENN:

\mybox{\textcolor{blue!75!black}{SENN Ingredient 1:} How to add capacity without changing the overall function.}{
We add proposed neurons $p$ to layer $i$
by concatenation along the $i$th hidden dimension
$(\mathbf{0} \uplus \mW_{i+1}) \circ (\sigma_p \uplus \sigma_i) \circ (\mW_p \uplus \mW_i) = \mW_{i+1} \circ \sigma_i \circ \mW_i$,
and initialize the output weights of $p$ to zero.
In CNNs we exploit skip connections to generalize the same method to layer insertion.
For MLPs, we insert a new layer $q$ by replacing some linear transform $\mW_i$
with the composition $(\mW_i \inv{\mW}_q) \circ (\sigma_q = \mI) \circ \mW_q$,
where $\mW_q$ is invertible and $\sigma_q$ is initialized to the identity.
}

When working with MLPs, we must therefore choose a suitable parameterized activation function.
Rational activation functions satisfy our conditions and were shown to obtain good real world performance \cite{DBLP:conf/iclr/0001SK20}.
We use the simplified form
$\sigma_\vtheta (x) = \alpha x + (\beta + \gamma x)/(1+x^2)$,
where $\vtheta = \{ \alpha, \beta, \gamma \}$ are the three parameters of $\sigma$,
and setting $\vtheta = \{ 1, 0, 0 \}$ results in the identity function, as required.
Since this parameter count is small, we do not share the activation function weights within our layers.

\subsection{What about the interactions with other parameters?}
We will now turn to the question of the interactions between new and existing parameters.
When we say that a new parameter is \redundant{} with the existing parameters, we mean that whatever change to the network predictions, and therefore loss, we could make having introduced the new parameter is already possible with some combination of the existing parameters.
There is also a possible positive interaction we have not yet mentioned: perhaps there is some beneficial combination of changes to the new and existing parameters which would not be beneficial if made separately.
If we could find the joint local optimum $\vtheta^*$ of all these parameters together, we would automatically account for both of these effects.
Fortunately, if we approximate the local loss function in terms of its first and second derivatives, there is a closed form solution to this problem:
\begin{equation}
    \vtheta^* = \vtheta - \mH^{-1} \vg
\end{equation}
where $\vg$ and $\mH$ are the first and second derivatives (gradient and Hessian) of the loss function respectively.
Recalling that we earlier expected the possible reduction in loss to be proportional to both the \leverage{} or gradient, $\vg$, and the \range{} or distance to the local optimum, $\mH^{-1}\vg$,
we can also examine the corresponding closed form solution for that reduction.
Specifically, we consider the second order truncation of the loss function taylor series, and introduce $\eta$ as some approximation thereof:
\begin{equation}
    \frac{1}{2}\eta(\vtheta) :\approx
    \frac{1}{2} \vg_\vtheta^T \mH_\vtheta^{-1} \vg_\vtheta =
    \mathcal{L}(\vtheta) - \mathcal{L}(\vtheta_*)
    + \mathcal{O}\left( |\vtheta_* - \vtheta|^3 \right)
    \label{eqn:eta_as_loss_second_order_truncation}
\end{equation}
where we have introduced $\vtheta$ subscripts to represent the dependence of the derivatives on where they are evaluated.

The value of $\eta(\vtheta)$ is a measure of how much we can expect to reduce the current loss using some particular set $\vtheta$ of parameters, given a quadratic approximation to the local loss surface.
It accounts for the \leverage{} of the parameters $\vtheta$, their \range, and the extent to which they are mutually \redundant.
To get a sense for how much better the combination $\vtheta_p \uplus \vtheta_0$ of some new proposed parameters $\vtheta_p$ with the existing parameters $\vtheta_0$ is than the existing parameters alone, we need only examine the corresponding difference $\Delta \eta (\vtheta_p) = \eta(\vtheta_p \uplus \vtheta_0) - \eta(\vtheta_0)$.
In particular:
\mybox{\textcolor{blue!75!black}{SENN Ingredient 2:} What initialization to use for new parameters.}{
When we must choose \what{} initialization to use for (some subset of) our new parameters $\vtheta_p$, we make this choice in order to maximize the resulting increase $\Delta \eta(\vtheta_p)$.
}

It is now also possible to specify \where{} to add parameters.
\mybox{\textcolor{blue!75!black}{SENN Ingredient 3:} Where to add capacity.}{
Assuming the mode of addition (width or depth) to already be chosen, we add parameters $\vtheta_p$ \where{} they will maximize the resulting increase $\Delta \eta(\vtheta_p)$.
}
When choosing between proposals we maximize $\eta$ as above, but the space of initializations is large and we must choose a finite set to compare.
In practice we could directly optimize for $\Delta \eta$ or use some sampling procedure, for example parameters in proposals may be drawn from the standard initializing distributions for neural networks.

\subsection{When to add new capacity}
We have seen that when choosing between possible ways to add new capacity it is sufficient to maximize the increase $\Delta \eta$ in expansion score,
but the question remains of when to add capacity.
We answer this question by requiring the associated increase to be ``sufficient'':
we require the relative increase $\Delta \eta / \eta_0$ over the current score $\eta_0$ to surpass an \textbf{expansion threshold} $\tau$.
If the best proposal under consideration surpasses this threshold we add it and we repeat until such a proposal no longer exists.
If $\eta_0$ is very small due to the network approaching convergence this might result in proposals with negligible effects on the final loss being accepted,
and so we further require the increase $\Delta \eta$ to also surpass an absolute \textbf{stopping criterion} $\alpha$.

\mybox{\textcolor{blue!75!black}{SENN Ingredient 4:} When to add capacity.}{
We add capacity \when{} the relative increase $\Delta \eta/ \eta_0$ exceeds $\tau$
and the absolute increase exceeds $\alpha$.
}

While we answer \textbf{when}, \where and \what cohesively with $\eta$ during training,
the stopping criterion can therefore be seen as analogous to monitoring for loss plateaus,
and we thus concur with all prior works on terminating training.

\subsection{SENN's mathematical and practical perspective}
\label{sec:peudoinverse}
We have seen that dynamically sized neural networks may be constructed by reference to a second order expansion of the loss function (equation \ref{eqn:eta_as_loss_second_order_truncation}), but the exact hessian $\mH$ is not convenient to work with.
In practice, we approximate it with the fisher matrix $\mF$, which, unlike $\mH$, is always positive definite and whose further properties we will now explore.
Considering the whole dataset of $N$ examples simultaneously, our neural network can be seen as a function $\Theta \rightarrow \mathcal{Y}$ from parameter space to the concatenation of all outputs it produces for every data item.
All directions in the space of such functions expressible by our parameters $\vtheta \in \Theta$ occur as images $\mJ \vt \in \mathcal{T}(\mathcal{Y})$ under the jacobian $\mJ$ of the neural network at $\vtheta$ of some tangent vector $\vt \in \mathcal{T}_\vtheta(\Theta)$ to the parameter space.
By abuse of notation we call the space of such directions $\mJ (\Theta)$.
By differentiating the loss function, we may obtain a gradient with respect to the concatenated outputs $\vg_y$.
The various choices of definition for the Fisher matrix $\mF$ correspond to choices of metric on $\mathcal{Y}$,
but for the purposes of simplicity we choose the euclidean metric, corresponding to $\mF := \frac{1}{N}\mJ^T\mJ$.
The results of this section generalize to other metrics by change of basis.
The factor of $\frac{1}{N}$ exists to counterbalance the mean over examples in the standard gradient definition $\vg := \frac{1}{N} \mJ^T \vg_y$.

The direction taken in function space by natural gradient descent is the projection $\mP_\Theta$ of $\vg_y$ onto $\mJ (\Theta)$, according to our implicitly chosen metric.
This can be seen via the properties of the Moore-Penrose pseudoinverse $\mJ^+$ and the substitution $\mJ \mF^{-1} \mJ^T \vg_y = \mJ \mJ^+ \vg_y = \mP_\Theta(\vg_y)$.
By further substituting the definition of $\vg = \frac{1}{N} \mJ^T \vg_y$ we obtain
\begin{equation}
    \eta = \vg^T \mF^{-1} \vg = \frac{1}{N}||\mP_\Theta(\vg_y)||_2^2
\end{equation}
where we can see that the natural expansion score $\eta$ corresponds to the squared length of the component of the output gradient expressible by our current parameters, normalized by the dataset size $N$.
This has the immediate consequence that $\eta$ is bounded above by $\lambda := \frac{1}{N} ||\vg_y||_2^2$, as the projection cannot be longer than the original.
When we expand our parameter set $\Theta$, we increase the dimension of $\mJ (\Theta)$ and, in general, also the squared length of the projection.
This corresponds exactly to the increase in $\Delta \eta$ which we consider when deciding whether to add capacity. \\

\textbf{Bounded addition.}
Having shown $\eta$ to be bounded above by $\lambda$,
we will now sketch a proof for a corresponding bound on the number of simultaneous additions $N_s$.
From the relative expansion threshold $\tau$, we have that $\eta_{i+1} > (1 + \tau) \eta_i$ after each successive addition.
This means that $\eta$ approaches its bound exponentially with successive additions, yielding $N_s < (\ln \lambda - \ln \eta_0)/\ln (1 + \tau)$.
Since $\mF^{-1}$ is positive semi-definite, we have $\eta_0 \geq 0$,
and by using the stopping criterion $\alpha$ we have $\eta_1 - \eta_0 > \alpha$.
This gives us a lower bound $\eta_1 > \alpha$ whose substitution into our first bound on $N_s$ removes the dependency on $\eta_0$.\\

\begin{theorem}[\textbf{Upper bound on the ``rate'' of neuron addition}]
The maximum number of additions $N_s$ from repeatedly running the expansion algorithm is bounded:
$N_s < 1 + (\ln \lambda - \ln \alpha)/\ln (1 + \tau)$.
\end{theorem}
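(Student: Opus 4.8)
The plan is to formalise the informal sketch just above: combine the \emph{a priori} ceiling $\eta \le \lambda$ with the guaranteed geometric growth of $\eta$ per accepted addition, and solve for $N_s$.

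First I would record two facts that hold throughout one run of the expansion loop. \textbf{(i) A fixed ceiling.} Every accepted addition leaves the represented function unchanged (SENN Ingredient 1), so the output gradient $\vg_y$, and hence $\lambda = \frac{1}{N}\|\vg_y\|_2^2$, is unaffected by additions; moreover, since an orthogonal projection is non-expansive, $\eta = \frac{1}{N}\|\mP_\Theta(\vg_y)\|_2^2 \le \frac{1}{N}\|\vg_y\|_2^2 = \lambda$ after every addition, including the last. \textbf{(ii) Geometric growth.} Write $\eta_0, \eta_1, \dots, \eta_{N_s}$ for the scores before the first addition, after the first, $\dots$, after the last. Each accepted proposal passes the relative test $\Delta\eta/\eta_0 > \tau$, which gives $\eta_{i+1} > (1+\tau)\,\eta_i$ for $0 \le i < N_s$.

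Next I would chain (ii) from the first accepted addition onwards --- there are exactly $N_s - 1$ multiplicative steps between $\eta_1$ and $\eta_{N_s}$ --- to obtain $\eta_{N_s} > (1+\tau)^{N_s-1}\eta_1$, and combine with (i) applied to the final state: $(1+\tau)^{N_s-1}\eta_1 < \eta_{N_s} \le \lambda$. To eliminate $\eta_1$ I would lower-bound it: since $\mF^{-1}$ is positive semi-definite, $\eta_0 = \vg^T \mF^{-1} \vg \ge 0$, so the absolute stopping criterion $\Delta\eta = \eta_1 - \eta_0 > \alpha$ forces $\eta_1 > \alpha$. Substituting gives $(1+\tau)^{N_s-1}\alpha < \lambda$; taking logarithms (valid because $\alpha, \lambda > 0$ and $1+\tau > 1$) and rearranging yields $N_s - 1 < (\ln\lambda - \ln\alpha)/\ln(1+\tau)$, which is the claim. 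The statement is vacuous when $N_s \le 1$ or $\lambda \le \alpha$ (in which case no addition can occur, as any accepted addition needs $\alpha < \Delta\eta \le \eta_1 \le \lambda$), so the content lies in the regime $N_s \ge 2$.

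I do not anticipate a genuine obstacle here; the only points requiring care are the off-by-one in the exponent (the growth factor multiplies $N_s - 1$ times rather than $N_s$, because lower-bounding by $\eta_1 > \alpha$ already accounts for the first addition; chaining instead from $\eta_0 \ge 0$ would give the useless bound $\eta_{N_s} > (1+\tau)^{N_s}\cdot 0$), and the justification that the same $\lambda$ appearing in the statement may be used for the post-final-addition inequality $\eta_{N_s} \le \lambda$, which rests on function-preservation of the additions.
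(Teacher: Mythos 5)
Your proof is correct and follows essentially the same route as the paper's: non-negativity of $\eta_0$ plus the stopping criterion gives $\eta_1 > \alpha$, the relative threshold gives geometric growth $\eta_{i+1} > (1+\tau)\eta_i$, and chaining this against the ceiling $\eta \le \lambda$ yields $N_s - 1 < (\ln\lambda - \ln\alpha)/\ln(1+\tau)$, exactly as the paper does by substituting $\alpha$ for $\eta_0$ in its Lemma on the $\tau$-bound. Your explicit handling of the off-by-one and the remark that function-preserving additions leave $\lambda$ fixed are fine points of care but do not change the argument.
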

(Full proof in appendix.)
For example, if $\tau = 1$ and $\alpha / \lambda > 10^{-3}$ then $N_s < 1 + \frac{3\ln10}{\ln2} < 11$.

\textbf{Choice of curvature approximation.}
Since it is computationally impractical to work with the full fisher $\mF$, we must approximate.
In the extreme case one might choose the identity matrix, at which point $\eta$ becomes the squared magnitude of the gradient.
However, consider adding a second copy of an existing parameter which receives a large gradient.
This parameter will receive the same gradient, and unboundedly many more copies may be introduced, each contributing the same increase in squared magnitude.
This failure mode stems from a failure to account for correlation between neurons,
and diagonal approximations of $\mF$ will have this same problem.
We thus consider the KFAC~\cite{DBLP:conf/icml/MartensG15} approximation $\mF \approx \Tilde{\mF} = \mS \otimes \mA$ for each layer separately,
where the two kronecker factors are the input activation second moment $\mA$ of a layer and a pre-activation gradient second moment $\mS$.
The $\mA$ factor accounts for redundancy between incoming signals, whereas $\mS$ accounts for redundancy between neurons in subsequent layers.

\begin{figure*}[t!]
\begin{center}
\includegraphics[height=0.3533\textwidth]{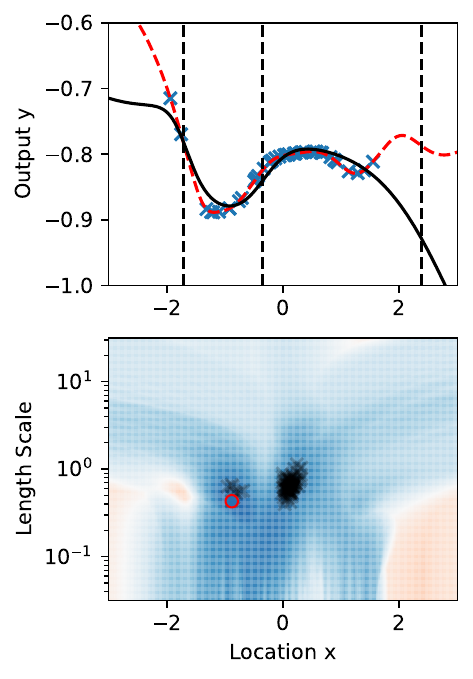}
\includegraphics[height=0.35\textwidth]{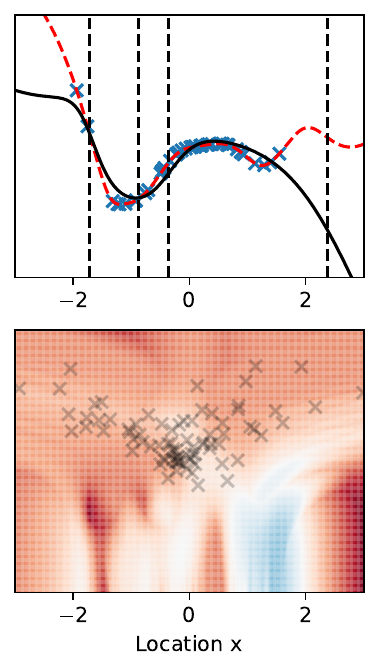}
\includegraphics[height=0.35\textwidth]{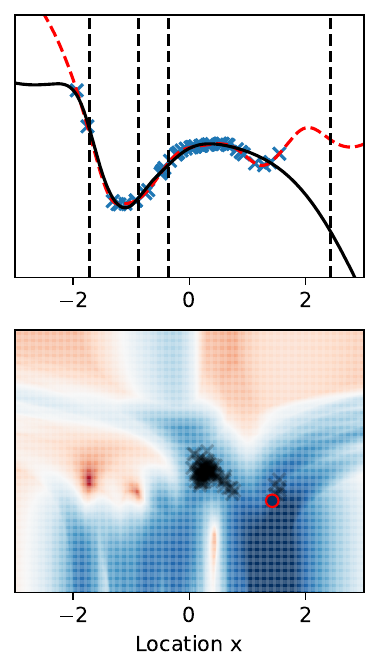}
\includegraphics[height=0.35\textwidth]{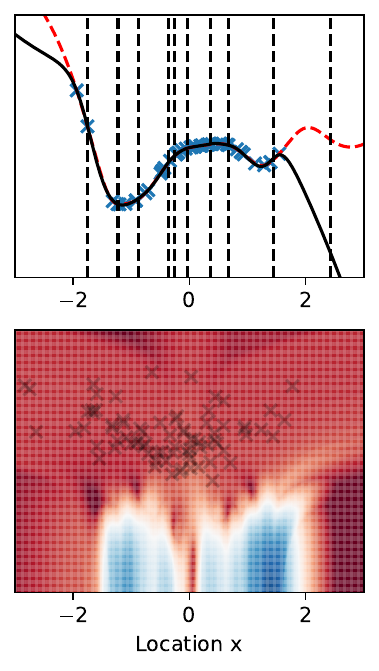}
\includegraphics[height=0.35\textwidth]{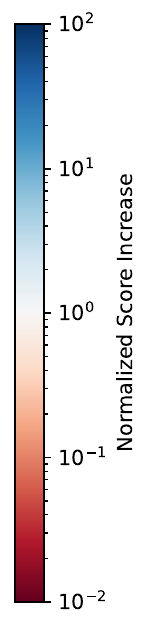}
\end{center}
\vskip -0.1in
\caption{
A single layer SENN (black, solid) is trained on a target function (red, dashed)
via least-squares regression on samples (blue, markers).
Vertical lines show the location of existing neurons.
The lower panels show $\Delta \eta' / \eta_0$ as a function of the location and scale of the nonlinearity introduced by a new neuron.
Accepted and rejected proposals are marked in red and black respectively.
From left to right we see the landscape before and immediately after the fourth neuron is added,
before the fifth is added, and at the end of training.
SENN adds neurons where relevant in order to achieve a good fit.
}
\label{fig:regression}
\end{figure*}

This approximation is advantageous due to the inverse formula $\Tilde{\mF}^{-1} = \mS^{-1} \otimes \mA^{-1}$.
The corresponding contribution to the expansion score from a layer $l$ may be written as the trace $\eta = \Tr[\mS_l^{-1}\partial \mW \mA_{l-1}^{-1} \partial \mW^T]$ where the gradient for the linear weights $\mW_l$ is given by the correlation $\partial \mW = \expect{\vg_l \va_{l-1}^T}$ between input activations from the preceding layer $\va_{l-1}$ and output gradients $\vg_l$.
Let the residual gradient $\vg_r = \vg - \expect{\vg_l \va_{l-1}^T} \mA^{-1} \va$ be that part of the output gradients not predicted by the existing activations.
Then if $\va_p$ is the activation vector of a set of proposed neurons in layer $l-1$,
and $\mA_p$ is their second moment:
\begin{theorem}[\textbf{Computationally cheap lower bound on increase in natural expansion score within a layer \normalfont{\textit{l}}}]
$\Delta \eta' := \Tr[\mA_p^{-1} \expect{\va_p \vg_r^T} \mS_l^{-1} \expect{\vg_r \va_p^T}]$
is a lower bound $\Delta \eta' \leq \Delta \eta = \eta_p - \eta_0$ on the improvement in natural expansion score due to a proposed addition of neurons $p$ to~$l$. 
\end{theorem}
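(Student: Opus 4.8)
The plan is to localize the computation to the single layer $l$ whose KFAC contribution changes, and then reduce the inequality to a positive-semidefinite ordering of two inverse matrices. First I would note that, since the output weights of the proposed neurons are initialized to zero (Ingredient~1), the overall function is unchanged, and hence so are the pre-activation gradients $\vg_l$ of layer $l$ and the original input activations $\va_{l-1}$; therefore $\mS_l$, $\mA_{l-1}$, the correlation $\partial\mW = \E[\vg_l\va_{l-1}^T]$, and the old score $\eta_0 = \Tr[\mS_l^{-1}\partial\mW\mA_{l-1}^{-1}\partial\mW^T]$ are all unaffected. The only change is that layer $l$ now receives the concatenated input $\va_{l-1}\uplus\va_p$. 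Writing $\mC := \E[\va_{l-1}\va_p^T]$ for the cross second moment and $\partial\mW_p := \E[\vg_l\va_p^T]$, the KFAC input factor becomes the block matrix $\mA' = \bigl(\begin{smallmatrix}\mA_{l-1} & \mC\\ \mC^T & \mA_p\end{smallmatrix}\bigr)$ and the gradient correlation becomes the block row $(\partial\mW,\ \partial\mW_p)$, so that $\eta_p = \Tr[\mS_l^{-1}(\partial\mW,\partial\mW_p)(\mA')^{-1}(\partial\mW,\partial\mW_p)^T]$.

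Next I would invert $\mA'$ via the Schur complement of its top-left block, $\mM := \mA_p - \mC^T\mA_{l-1}^{-1}\mC$. Multiplying out the resulting $2\times 2$ block product and collecting terms, the correction to the top-left block and the two cross terms combine into a single square, giving
\[
(\partial\mW,\partial\mW_p)(\mA')^{-1}(\partial\mW,\partial\mW_p)^T = \partial\mW\,\mA_{l-1}^{-1}\partial\mW^T + R\,\mM^{-1}R^T, \qquad R := \partial\mW_p - \partial\mW\,\mA_{l-1}^{-1}\mC .
\]
The key observation is that $R = \E\bigl[(\vg_l - \partial\mW\,\mA_{l-1}^{-1}\va_{l-1})\,\va_p^T\bigr] = \E[\vg_r\va_p^T]$ is precisely the correlation between the proposed activations and the residual gradient $\vg_r$ of the statement. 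Applying $\Tr[\mS_l^{-1}\,\cdot\,]$ and subtracting $\eta_0$ gives the exact identity $\Delta\eta^{(l)} := \eta_p - \eta_0 = \Tr[\mS_l^{-1}R\mM^{-1}R^T] = \Tr\bigl[\mM^{-1}(R^T\mS_l^{-1}R)\bigr]$ for the change in layer $l$'s contribution; since $R\mM^{-1}R^T \succeq 0$ this also re-derives that expanding a layer's input never decreases its score, and as the other layers' contributions are unchanged we have $\Delta\eta \ge \Delta\eta^{(l)}$.

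It then remains to compare $\Delta\eta^{(l)}$ with $\Delta\eta' = \Tr[\mA_p^{-1}\E[\va_p\vg_r^T]\mS_l^{-1}\E[\vg_r\va_p^T]] = \Tr\bigl[\mA_p^{-1}(R^T\mS_l^{-1}R)\bigr]$. Because $\mC^T\mA_{l-1}^{-1}\mC \succeq 0$ we have $\mM \preceq \mA_p$, hence (both being positive definite) $\mM^{-1} \succeq \mA_p^{-1}$, so $\mM^{-1} - \mA_p^{-1} \succeq 0$; moreover $R^T\mS_l^{-1}R \succeq 0$ since $\mS_l^{-1} \succeq 0$. Using $\Tr[PQ]\ge 0$ whenever $P,Q \succeq 0$ (write $\Tr[PQ] = \Tr[Q^{1/2}PQ^{1/2}]$), we conclude $\Delta\eta - \Delta\eta' \ge \Delta\eta^{(l)} - \Delta\eta' = \Tr\bigl[(\mM^{-1}-\mA_p^{-1})(R^T\mS_l^{-1}R)\bigr] \ge 0$, which is the claim.

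The step I expect to be the main obstacle is not the linear algebra per se but getting the bookkeeping of what is held fixed exactly right: one must argue carefully that zero-initializing the new output weights freezes $\vg_l$, $\va_{l-1}$, and therefore $\mS_l$ and $\mA_{l-1}$, and one must verify that the Schur term $R$ produced by the block inversion really is the residual-gradient correlation $\E[\vg_r\va_p^T]$ of the statement (a one-line identity, but the crux of why the clean bound holds). A minor technicality is the possibly only positive-semidefinite case of $\mA'$ (or $\mA_p$), which can be handled by a ridge/limiting argument or by replacing inverses with pseudoinverses as in \cref{sec:peudoinverse}.
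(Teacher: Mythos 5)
Your proof is correct and follows essentially the same route as the paper's: a Schur-complement (block LDU) decomposition of the joint activation second moment, the identification $R=\E[\vg_r\va_p^T]$ matching the paper's residual-interchange lemma, and the comparison $\hat{\mA}_p^{-1}\succeq\mA_p^{-1}$ combined with positivity of a trace of positive semi-definite factors. The only notable (and harmless) deviations are that you establish $\hat{\mA}_p^{-1}\succeq\mA_p^{-1}$ via operator anti-monotonicity of the inverse rather than the paper's power-series expansion, and you conclude with $\Tr[PQ]\ge 0$ for $P,Q\succeq 0$ instead of the paper's Kronecker-product argument.
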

(proof in appendix via block LDU decomposition of joint activation covariance).
The reader will note that if $\mA_p$ is small, \eg{} adding a single neuron, then many such proposals may be considered cheaply, as no large inversions beyond $\Tilde{\mF}^{-1}$ are required.
In the rarer case of layer addition, further approximations may be made if necessary, \eg{} considering the mean $\Delta \eta'$ for each new neuron considered separately.
Further tricks for working with $\Tilde{\mF}^{-1}$ such as rank one updates, stochastic estimation, and accounting for curvature due to activation functions are also detailed in the appendix.

\subsection{Pruning is reversed addition}
\label{sec:pruning}
At any time we are considering adding new features we have access to the KFAC representation of $\mF$.
Not only does this allow the easy approximation of the cost of removing an existing feature \cite{optimal_brain_damage},
but it allows both the adjustment of existing neurons to compensate for this removal,
and the inclusion of this adjustment in the estimate of removal costs \cite{optimal_brain_surgeon}.
We combine this estimated removal cost with the associated reduction in expansion score,
and are thereby able to both expand and prune in a cohesive manner.
This allows us to compensate for potential excessive expansion with later pruning,
and, novelly, for excessive pruning through later expansion.


\section{Experiments}
\label{sec:experiments}

We now illustrate the behavior of the natural expansion score through an example of least-squares regression and provide empirical intuition for depth insertion through visualization of decision boundaries in binary classification. We then demonstrate SENN's efficacy in dynamic architecture selection in popular image classification. Finally, we demonstrate architecture stability through alternating expansion and pruning in the presence of perturbations and conclude with SENN's use in transfer learning of a pre-trained model.
The implementation code we use in these experiments is available at \url{https://github.com/ml-research/self-expanding-neural-networks}.

\subsection{Width addition in least-squares regression}
We start by showing that the evolution over training of the possible improvements $\Delta \eta'$ in natural expansion score due to potential width expansions is meaningful. To this end, consider the application of a single layer SENN to a one dimensional least squares regression task  as shown in figure \ref{fig:regression},
i.e. SENN with depth addition deliberately disabled.
With basis functions given by the neurons of that layer, 
we can plot the normalized score increase $\Delta \eta' / \eta_c$ of the best neuron for each basis function location and length scale.
Where $\Delta \eta' / \eta_c > 1$ there exists an acceptable proposal.
Accepted/rejected proposed neurons are shown on this landscape in red/black at key points in training.
We see in the leftmost panel that the best such proposal is accepted because it achieves a large improvement in $\eta$,
and it corresponds to a basis function location close to datapoints with visibly large prediction error which we have been unable to reduce using the existing neurons.
The next panel shows the same landscape after the new neuron is introduced,
and it can be seen that the $\Delta \eta' / \eta_c$ values for neurons with similar locations to it have been dramatically reduced
since they would be redundant.
The second panel from the right shows the result of optimizing the new expanded parameters until the point at which the next neuron would be added.
It can be seen that the prediction errors in the region of the previously introduced neuron are now practically invisible,
and that the next neuron is to be introduced in a different region in which errors remain.
The final panel shows the function approximation at the conclusion of training. The prediction errors are now negligible and proposals with large relative increase in $\eta$ are not to be found in the region considered. \\
Overall, SENN thus identifies regions of locally insufficient capacity in our parameterization and
targets these regions precisely with new added neurons to achieve a good fit.

\begin{figure}[t!]
\begin{center}
    \includegraphics[width=0.9\columnwidth]{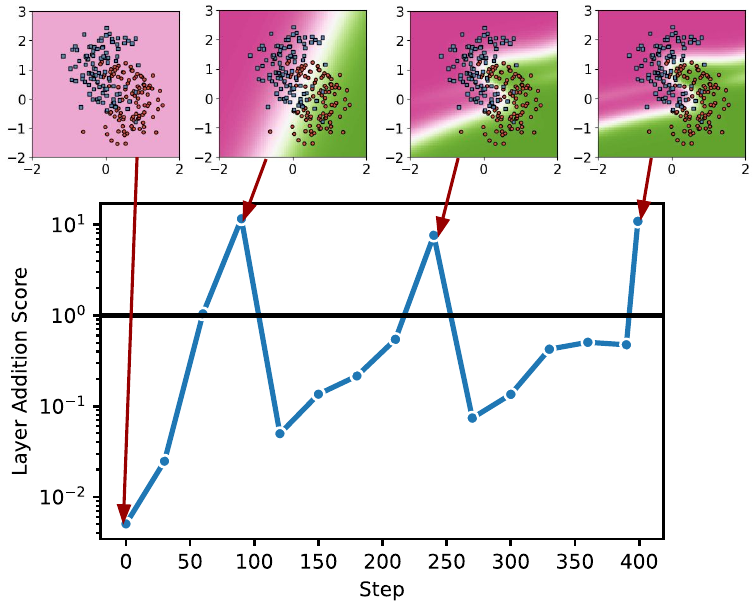}
\end{center}
    \vskip -0.1in
    \caption{
    2-D binary classification with SENN. The normalized layer addition score $\Delta \eta' / \eta_c$ is shown as a function of optimization steps; the horizontal bar shows the point above which a layer will be added.
    The score increases during three phases during which the SENN has initial zero, one and then two hidden layers.
    These layer insertions allow SENN to represent more complex decision boundaries (shown at the top)
    when required for global expressivity.
    \label{fig:classification}}
\vspace*{-0.5em}
\end{figure}

\subsection{Depth (layer) addition in 2-D binary classification}
Let us now transition to the insertion of depth.
In figure \ref{fig:classification} we plot $\Delta \eta' / \eta_c$ for the best layer addition proposals as a function of overall optimizer steps for the two-dimensional inputs from the half-moons dataset \cite{scikit-learn}.
Visualizations of the learned decision boundary at initialization and just before layer additions are shown.
We can observe that $\Delta \eta' / \eta_c$ increases approximately monotonically during three phases,
punctuated by large drops when layers are added.
In the initial phase the network has zero hidden layers (i.e. is linear),
and the simplicity of the decision boundary at the end of this phase reflects this.
Since the datapoints are not linearly separable,
the large $\Delta \eta' / \eta_c$ value correctly indicates that the introduction of a hidden layer is necessary in order to further reduce loss.
The visible increase in decision boundary complexity and accuracy over the course of the second phase confirms this.
The beginning of the third phase marks the introduction of a second hidden layer and we wait until $\Delta \eta' / \eta_c$ rises again,
indicating an exhaustion of this new capacity, before reexamining the decision boundary.
The increase in boundary complexity is less visible this time, but close inspection reveals that the boundary has become narrower and more rounded.
In summary, we have intentionally constructed a scenario where depth addition is necessary
and have seen that SENN inserts new layers when this is necessary for global expressivity.

\begin{figure}[t!]
\begin{center}
\begin{subfigure}{0.32\columnwidth}
\centering
\includegraphics[height=2.5cm]{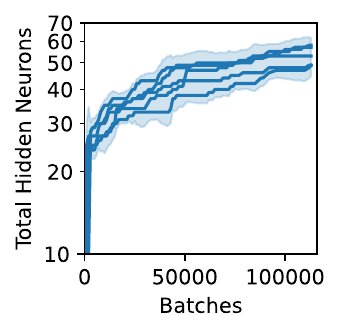}
\end{subfigure}
\begin{subfigure}{0.32\columnwidth}
\centering
\includegraphics[height=2.5cm]{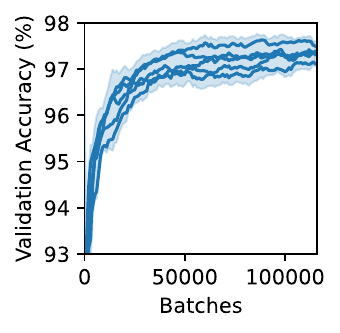}
\end{subfigure}
\begin{subfigure}{0.32\columnwidth}
\centering
\includegraphics[height=2.5cm]{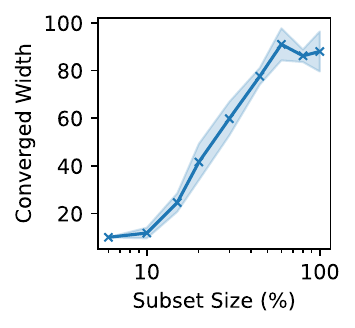}
\end{subfigure}
\end{center}
\vskip -0.15in
\caption{SENN shows reasonable and reproducible hidden layer growth on MNIST at appealing any-time validation accuracy without intermittent perturbations (left pair of panels). 
SENN features appropriate scaling with respect to data complexity in its chosen network sizes (right panel).}
\label{fig:mnist}
\vspace*{-0.8em}
\end{figure}

\subsection{Dynamic architecture size in image classification}
\begin{figure*}
    \centering
    \includegraphics[width=0.48 \textwidth]{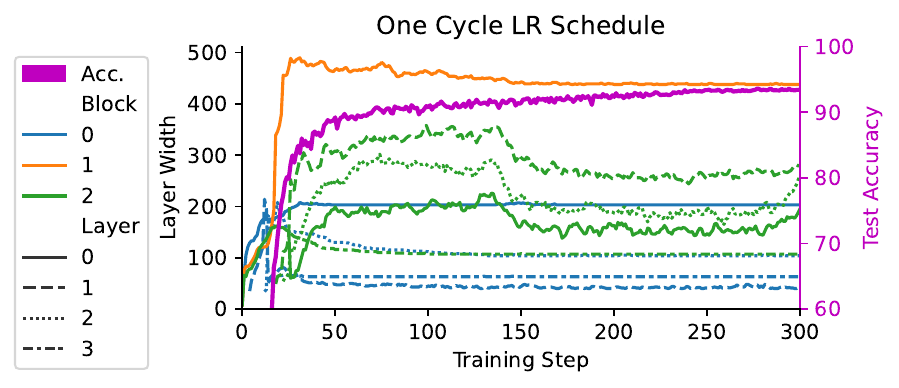}
    \unskip\ \vrule\
    \includegraphics[width= 0.48\textwidth]{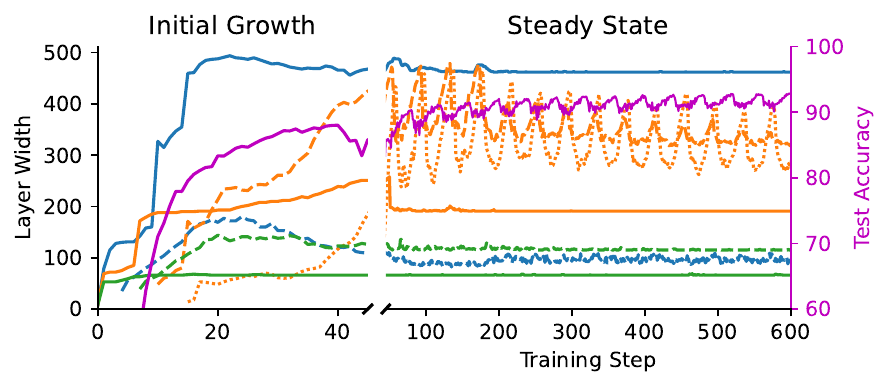}
    \vskip -0.1in
    \caption{SENN stabilizes easily, even when perturbed by a cyclic cosine learning rate schedule.
    As can be observed in the last two layers of block 1, the network capacity expands and compresses in correspondence with the learning rate.}
    \label{fig:cifar10}
    \vspace*{-0.25em}
\end{figure*}

\begin{figure}
    \centering
    \includegraphics[width=0.8\columnwidth]{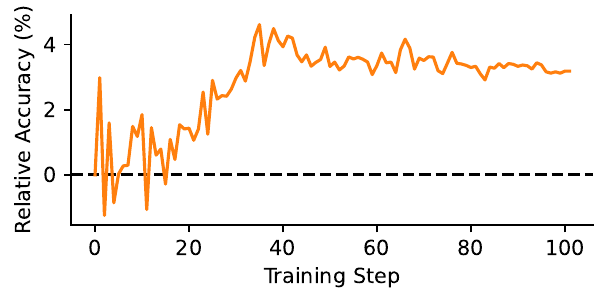}
    \vskip -0.1in
    \caption{Relative test improvements when SENN is used on a CIFAR pre-trained DenseNet in transfer to tiny-imagenet.}
    \label{fig:timnet}
    \vspace*{-0.25em}
\end{figure}

We now examine the full ability of SENN to choose an appropriate architecture through both modification of width and depth in image classification. 
First, the two leftmost panels of figure \ref{fig:mnist} show a fully-connected SENN's total hidden size and validation accuracy during training on the MNIST \cite{deng2012mnist} dataset as a function of total mini-batches seen.
Our SENN is initialized with a single hidden layer of size 10, and promptly adds a second hidden layer, also of size 10.
All five seeds considered then proceed to consistently add width to these layers at a moderate rate until a total hidden size of around 40 is reached,
at which point far fewer productive extensions of the network are found and addition slows dramatically.
This results in respectable validation performance (>97\%) by the end of training with very modest hidden neuron counts (50-60).
To complement this result, we further examine the way in which SENNs adapt their final converged size to the amount of information in the dataset.
To this end, we take class-balanced subsets of MNIST of varying sizes and show SENNs'
converged hidden sizes in the rightmost panel of figure \ref{fig:mnist}.
We can now distinguish three regimes.
For the smallest subsets, the initial hidden size of 10 is sufficient.
For subsets between 10\% and 60\%,
the final hidden size increases logarithmically,
but past that point further increases in subset size do not similarly increase the final network size.
We posit that this is due to substantial redundancy within the MNIST training set,
leaving further capacity growth unnecessary. Thus, SENN does not only provide desirable any time performance, but also tailors its size suitably to the available data.

Finally, the left half of figure \ref{fig:cifar10} shows an analogous classification experiment on the more complex CIFAR-10 dataset \cite{cifar10}. Here, we now use a convolutional SENN and following typical DenseNet convention three blocks are shown, with layers being insertable in any of the blocks and all widths being expandable.
We confirm that in this more challenging scenario, SENN finds a suitable architecture, achieves considerable performance (>93\%), and converges in size to reasonable complexity. In fact, we again emphasize that our method produces strong anytime performance: we are able to continually expand size, and even insert layers, during training without any attendant drops in validation accuracy - a property not shared by methods which rely on reinitializing a new network, e.g. \cite{DBLP:conf/nips/GeifmanE19}. Moreso, upon careful consideration we can observe that SENN can even compress again (reverse addition), if learning permits. We proceed to investigate this unique ability further in the next subsection.

\subsection{SENN stability in growing and pruning cycles}
To shed light on SENN's ability to both expand and compress architectures, we show a variant of our CIFAR-10 classification experiment in the right panel of figure \ref{fig:cifar10}. Here, the main difference is a deliberate use of a cyclical cosine learning rate (LR) schedule ranging between $3\times10^{-4}$ and $3 \times 10^{-8}$ for many epochs. Such repeated LR cycle introduces continuous perturbation to the network. However, after an initial growth cycle  (epochs 0-40), SENN is able to adaptively compensate these perturbations and maintain high accuracy at all times (90-93\%). It retains a fully stable configuration in most of its layers but reacts in correspondence to the learning rate by equally mirroring the cycle through expansion and pruning, as clearly observable in the last two layers of block 1. We posit that this behavior does not only partially absolve us from cumbersome learning rate selection, but further exposes potentially rich connections to neural network learning regimes. In fact, we hypothesize that there exist immediate connections to the distribution and compression of information proposed by information bottleneck theory \cite{Shwartz-Ziv2017,Saxe2018}, but leave in-depth analysis to future work.

\subsection{Pre-trained model transfer improvements}
Our prior expansion and compression insights now prompt a final experiment to highlight SENN's utility in scenarios that start from a pre-trained model. To this end, we now begin with a standard NN model pre-trained to 90+\% on CIFAR-10 (a ``small'' 4-block DenseNet with 128 conv features in each of 3 layers per block) and continue to learn on the more challenging tiny-ImageNet dataset \cite{imagenet,tiny_imagenet}. In addition to standard transfer learning, we then also apply SENN to allow architecture modifications on the pre-trained model.
As the latter now allows capacity to be added as necessary,~we~observe a $\sim$ 3-4\% relative increase in test accuracy over the standard transfer-learned NN ($48.2\%$) in figure \ref{fig:timnet}.
We thus see that SENN allows the adaption of initially suitable trained networks to future tasks with unforeseen capacity requirements.

\section{Conclusion}
We have introduced the natural expansion score $\eta$ and shown how it may be used to cohesively answer the three key questions \when, \where and \what of dynamically growing and pruning neural networks.
We have demonstrated its ability to capture redundancy of new neurons with old and thereby make sensible expansion decisions across time and tasks. This makes SENN a perfect fit not only in finding suitable architectures for novel problems, but also for applications in e.g. continual learning \cite{kprior2021,mundt_survey_2023}.
Prospects for further development are promising, as our theoretical results regarding $\eta$ apply for arbitrary expansions of parameterized models,
and our method of expansion would extend naturally to \eg{} transformers, or normalizing flows
where layers may be initialized invertibly.

\section*{Acknowledgements}
This work was supported by the
project “safeFBDC - Financial Big Data Cluster” (FKZ:
01MK21002K), funded by the German Federal Ministry for
Economics Affairs and Energy as part of the GAIA-x initiative,
and the
Hessian research priority programme LOEWE within the
project “WhiteBox”.\\
We thank Emtiyaz Khan for valuable feedback on an earlier version of this manuscript.

\bibliography{references}

\begin{thebibliography}{37}
\providecommand{\natexlab}[1]{#1}
\providecommand{\url}[1]{\texttt{#1}}
\expandafter\ifx\csname urlstyle\endcsname\relax
  \providecommand{\doi}[1]{doi: #1}\else
  \providecommand{\doi}{doi: \begingroup \urlstyle{rm}\Url}\fi

\bibitem[Amari(1998)]{DBLP:journals/neco/Amari98}
Amari, S.
\newblock Natural gradient works efficiently in learning.
\newblock \emph{Neural Comput.}, 10\penalty0 (2):\penalty0 251--276, 1998.

\bibitem[Ash(1989)]{doi:10.1080/09540098908915647}
Ash, T.
\newblock Dynamic node creation in backpropagation networks.
\newblock \emph{Connection Science}, 1\penalty0 (4):\penalty0 365--375, 1989.

\bibitem[Bradbury et~al.(2018)Bradbury, Frostig, Hawkins, Johnson, Leary, Maclaurin, Necula, Paszke, Vander{P}las, Wanderman-{M}ilne, and Zhang]{jax2018github}
Bradbury, J., Frostig, R., Hawkins, P., Johnson, M.~J., Leary, C., Maclaurin, D., Necula, G., Paszke, A., Vander{P}las, J., Wanderman-{M}ilne, S., and Zhang, Q.
\newblock {JAX}: composable transformations of {P}ython+{N}um{P}y programs.
\newblock 2018.
\newblock URL \url{http://github.com/google/jax}.

\bibitem[Chen et~al.(2016)Chen, Goodfellow, and Shlens]{net2net2016}
Chen, T., Goodfellow, I., and Shlens, J.
\newblock Net2net: Accelerating learning via knowledge transfer.
\newblock \emph{International Conference on Representation Learning (ICLR)}, 2016.

\bibitem[Deng(2012)]{deng2012mnist}
Deng, L.
\newblock The mnist database of handwritten digit images for machine learning research.
\newblock \emph{IEEE Signal Processing Magazine}, 29\penalty0 (6):\penalty0 141--142, 2012.

\bibitem[Draelos et~al.(2017)Draelos, Miner, Lamb, Cox, Vineyard, Carlson, Severa, James, and Aimone]{Draelos2017}
Draelos, T.~J., Miner, N.~E., Lamb, C.~C., Cox, J.~A., Vineyard, C.~M., Carlson, K.~D., Severa, W.~M., James, C.~D., and Aimone, J.~B.
\newblock {Neurogenesis deep learning: Extending deep networks to accommodate new classes}.
\newblock \emph{International Joint Conference on Neural Networks (IJCNN)}, 2017.

\bibitem[Evci et~al.(2022)Evci, van Merrienboer, Unterthiner, Pedregosa, and Vladymyrov]{DBLP:conf/iclr/EvciMUPV22}
Evci, U., van Merrienboer, B., Unterthiner, T., Pedregosa, F., and Vladymyrov, M.
\newblock Gradmax: Growing neural networks using gradient information.
\newblock \emph{International Conference on Learning Representations (ICLR)}, 2022.

\bibitem[Geifman \& El{-}Yaniv(2019)Geifman and El{-}Yaniv]{DBLP:conf/nips/GeifmanE19}
Geifman, Y. and El{-}Yaniv, R.
\newblock Deep active learning with a neural architecture search.
\newblock \emph{Neural Information Processing Systems (NeurIPS)}, 2019.

\bibitem[Girolami \& Calderhead(2011)Girolami and Calderhead]{girolami2011_rmhmc}
Girolami, M. and Calderhead, B.
\newblock {Riemann Manifold Langevin and Hamiltonian Monte Carlo Methods}.
\newblock \emph{Journal of the Royal Statistical Society Series B: Statistical Methodology}, 73\penalty0 (2):\penalty0 123--214, 2011.

\bibitem[Gross(2000)]{Gross2000}
Gross, C.~G.
\newblock {Neurogenesis in the adult brain: Death of a dogma}.
\newblock \emph{Nature Reviews Neuroscience}, 1:\penalty0 67--73, 2000.

\bibitem[Hassibi et~al.(1993)Hassibi, Stork, and Wolff]{optimal_brain_surgeon}
Hassibi, B., Stork, D.~G., and Wolff, G.~J.
\newblock Optimal brain surgeon and general network pruning.
\newblock \emph{International Conference on Neural Networks (ICNN)}, 1993.

\bibitem[Heek et~al.(2023)Heek, Levskaya, Oliver, Ritter, Rondepierre, Steiner, and van {Z}ee]{flax2020github}
Heek, J., Levskaya, A., Oliver, A., Ritter, M., Rondepierre, B., Steiner, A., and van {Z}ee, M.
\newblock {F}lax: A neural network library and ecosystem for {JAX}.
\newblock 2023.
\newblock URL \url{http://github.com/google/flax}.

\bibitem[Huang et~al.(2019)Huang, Cheng, Bapna, Firat, Chen, Chen, Lee, Ngiam, Le, Wu, and Chen]{DBLP:conf/nips/HuangCBFCCLNLWC19}
Huang, Y., Cheng, Y., Bapna, A., Firat, O., Chen, D., Chen, M.~X., Lee, H., Ngiam, J., Le, Q.~V., Wu, Y., and Chen, Z.
\newblock Gpipe: Efficient training of giant neural networks using pipeline parallelism.
\newblock 2019.

\bibitem[Khan \& Swaroop(2021)Khan and Swaroop]{kprior2021}
Khan, M.~E. and Swaroop, S.
\newblock Knowledge-adaptation priors.
\newblock \emph{Neural Information Processing Systems (NeurIPS)}, 2021.

\bibitem[Krizhevsky et~al.(2009)Krizhevsky, Nair, and Hinton]{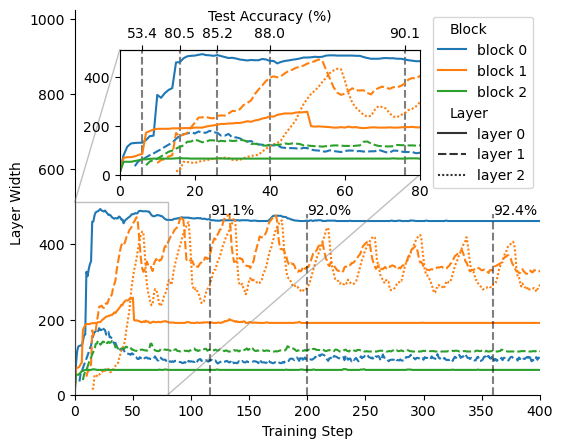}
Krizhevsky, A., Nair, V., and Hinton, G.
\newblock Cifar-10 (canadian institute for advanced research).
\newblock \emph{Technical Report}, 2009.
\newblock URL \url{http://www.cs.toronto.edu/~kriz/cifar.html}.

\bibitem[Krizhevsky et~al.(2017)Krizhevsky, Sutskever, and Hinton]{DBLP:journals/cacm/KrizhevskySH17}
Krizhevsky, A., Sutskever, I., and Hinton, G.~E.
\newblock Imagenet classification with deep convolutional neural networks.
\newblock \emph{ACM Communications}, 60\penalty0 (6):\penalty0 84--90, 2017.

\bibitem[Kudithipudi et~al.(2022)Kudithipudi, Aguilar-Simon, Babb, Bazhenov, Blackiston, Bongard, Brna, Chakravarthi~Raja, Cheney, Clune, Daram, Fusi, Helfer, Kay, Ketz, Kira, Kolouri, Krichmar, Kriegman, and Siegelmann]{Kudithipudi2022}
Kudithipudi, D., Aguilar-Simon, M., Babb, J., Bazhenov, M., Blackiston, D., Bongard, J., Brna, A., Chakravarthi~Raja, S., Cheney, N., Clune, J., Daram, A., Fusi, S., Helfer, P., Kay, L., Ketz, N., Kira, Z., Kolouri, S., Krichmar, J., Kriegman, S., and Siegelmann, H.
\newblock Biological underpinnings for lifelong learning machines.
\newblock \emph{Nature Machine Intelligence}, 4:\penalty0 196--210, 03 2022.

\bibitem[Le \& Yang(2015)Le and Yang]{tiny_imagenet}
Le, Y. and Yang, X.~S.
\newblock Tiny imagenet visual recognition challenge.
\newblock 2015.
\newblock URL \url{https://api.semanticscholar.org/CorpusID:16664790}.

\bibitem[LeCun et~al.(1989)LeCun, Denker, and Solla]{optimal_brain_damage}
LeCun, Y., Denker, J.~S., and Solla, S.~A.
\newblock Optimal brain damage.
\newblock 1989.

\bibitem[LeCun et~al.(2012)LeCun, Bottou, Orr, and M{\"u}ller]{LeCun2012}
LeCun, Y.~A., Bottou, L., Orr, G.~B., and M{\"u}ller, K.-R.
\newblock \emph{Efficient BackProp}, pp.\  9--48.
\newblock Springer Berlin Heidelberg, 2012.

\bibitem[Maile et~al.(2022)Maile, Rachelson, Luga, and Wilson]{maile2022}
Maile, K., Rachelson, E., Luga, H., and Wilson, D.~G.
\newblock When, where, and how to add new neurons to anns.
\newblock \emph{International Conference on Automated Machine Learning (AutoML)}, 2022.

\bibitem[Martens(2020)]{DBLP:journals/jmlr/Martens20}
Martens, J.
\newblock New insights and perspectives on the natural gradient method.
\newblock \emph{Journal of Machine Learning Research (JMLR)}, 21:\penalty0 146:1--146:76, 2020.

\bibitem[Martens \& Grosse(2015)Martens and Grosse]{DBLP:conf/icml/MartensG15}
Martens, J. and Grosse, R.~B.
\newblock Optimizing neural networks with kronecker-factored approximate curvature.
\newblock \emph{International Conference on Machine Learning (ICML)}, 2015.

\bibitem[Molina et~al.(2020)Molina, Schramowski, and Kersting]{DBLP:conf/iclr/0001SK20}
Molina, A., Schramowski, P., and Kersting, K.
\newblock Pad{\'{e}} activation units: End-to-end learning of flexible activation functions in deep networks.
\newblock \emph{International Conference on Learning Representations (ICLR)}, 2020.

\bibitem[Mundt et~al.(2023)Mundt, Hong, Pliushch, and Ramesh]{mundt_survey_2023}
Mundt, M., Hong, Y., Pliushch, I., and Ramesh, V.
\newblock A wholistic view of continual learning with deep neural networks: Forgotten lessons and the bridge to active and open world learning.
\newblock \emph{Neural Networks}, 160\penalty0 (03):\penalty0 306--336, 2023.

\bibitem[Nakkiran et~al.(2020)Nakkiran, Kaplun, Bansal, Yang, Barak, and Sutskever]{DBLP:conf/iclr/NakkiranKBYBS20}
Nakkiran, P., Kaplun, G., Bansal, Y., Yang, T., Barak, B., and Sutskever, I.
\newblock Deep double descent: Where bigger models and more data hurt.
\newblock \emph{International Conference on Learning Representations (ICLR)}, 2020.

\bibitem[Pedregosa et~al.(2011)Pedregosa, Varoquaux, Gramfort, Michel, Thirion, Grisel, Blondel, Prettenhofer, Weiss, Dubourg, Vanderplas, Passos, Cournapeau, Brucher, Perrot, and Duchesnay]{scikit-learn}
Pedregosa, F., Varoquaux, G., Gramfort, A., Michel, V., Thirion, B., Grisel, O., Blondel, M., Prettenhofer, P., Weiss, R., Dubourg, V., Vanderplas, J., Passos, A., Cournapeau, D., Brucher, M., Perrot, M., and Duchesnay, E.
\newblock Scikit-learn: Machine learning in {P}ython.
\newblock \emph{Journal of Machine Learning Research (JMLR)}, 12:\penalty0 2825--2830, 2011.

\bibitem[Russakovsky et~al.(2015)Russakovsky, Deng, Su, Krause, Satheesh, Ma, Huang, Karpathy, Khosla, Bernstein, Berg, and Fei{-}Fei]{imagenet}
Russakovsky, O., Deng, J., Su, H., Krause, J., Satheesh, S., Ma, S., Huang, Z., Karpathy, A., Khosla, A., Bernstein, M.~S., Berg, A.~C., and Fei{-}Fei, L.
\newblock Imagenet large scale visual recognition challenge.
\newblock \emph{International Journal of Computer Vision (IJCV)}, 115\penalty0 (3):\penalty0 211--252, 2015.

\bibitem[Rusu et~al.(2016)Rusu, Rabinowitz, Desjardins, Soyer, Kirkpatrick, Kavukcuoglu, Pascanu, and Hadsell]{DBLP:journals/corr/RusuRDSKKPH16}
Rusu, A.~A., Rabinowitz, N.~C., Desjardins, G., Soyer, H., Kirkpatrick, J., Kavukcuoglu, K., Pascanu, R., and Hadsell, R.
\newblock Progressive neural networks.
\newblock \emph{arXiv:1606.04671}, 2016.

\bibitem[Saxe et~al.(2018)Saxe, Bansal, Dapello, Advani, Kolchinsky, Tracey, and Cox]{Saxe2018}
Saxe, A.~M., Bansal, Y., Dapello, J., Advani, M., Kolchinsky, A., Tracey, B.~D., and Cox, D.~D.
\newblock On the information bottleneck theory of deep learning.
\newblock \emph{International Conference on Representation Learning (ICLR)}, 2018.

\bibitem[Shwartz-Ziv \& Tishby(2017)Shwartz-Ziv and Tishby]{Shwartz-Ziv2017}
Shwartz-Ziv, R. and Tishby, N.
\newblock Opening the black box of deep neural networks via information.
\newblock \emph{arXiv:1703.00810}, 2017.

\bibitem[Szegedy et~al.(2015)Szegedy, Liu, Jia, Sermanet, Reed, Anguelov, Erhan, Vanhoucke, and Rabinovich]{DBLP:conf/cvpr/SzegedyLJSRAEVR15}
Szegedy, C., Liu, W., Jia, Y., Sermanet, P., Reed, S.~E., Anguelov, D., Erhan, D., Vanhoucke, V., and Rabinovich, A.
\newblock Going deeper with convolutions.
\newblock 2015.

\bibitem[Vadodaria \& Jessberger(2014)Vadodaria and Jessberger]{Vadodaria2014}
Vadodaria, K.~C. and Jessberger, S.
\newblock {Functional neurogenesis in the adult hippocampus: Then and now}.
\newblock \emph{Frontiers in Neuroscience}, 8:\penalty0 1--3, 2014.

\bibitem[von Mises(1964)]{von_mises1964}
von Mises, R.
\newblock \emph{Mathematical theory of probability and statistics}, chapter VIII.9.3.
\newblock Academic Press, New York, 1964.

\bibitem[Wu et~al.(2019)Wu, Wang, and Liu]{DBLP:conf/nips/WuW019}
Wu, L., Wang, D., and Liu, Q.
\newblock Splitting steepest descent for growing neural architectures.
\newblock \emph{Neural Information Processing Systems (NeurIPS)}, 2019.

\bibitem[Wu et~al.(2020)Wu, Liu, Stone, and Liu]{DBLP:conf/nips/WuLS020}
Wu, L., Liu, B., Stone, P., and Liu, Q.
\newblock Firefly neural architecture descent: a general approach for growing neural networks.
\newblock \emph{Neural Information Processing Systems (NeurIPS)}, 2020.

\bibitem[Yoon et~al.(2018)Yoon, Yang, Lee, and Hwang]{DBLP:conf/iclr/YoonYLH18}
Yoon, J., Yang, E., Lee, J., and Hwang, S.~J.
\newblock Lifelong learning with dynamically expandable networks.
\newblock \emph{International Conference on Learning Representations (ICLR)}, 2018.

\end{thebibliography}
\bibliographystyle{icml2024}

\newpage
\appendix
\onecolumn

\section{Proofs}
\label{app:proofs}
\setcounter{theorem}{0}

\subsection{Theorem 1: Bounded rate of addition}
In this section we prove theorem 1 of the main body.
We will assume $\mF \succ 0$ to be positive definite, with the following straightforward consequence
\begin{lemma}
\label{lemma:positive_eta}
The natural expansion score is non-negative $\eta = \vg^T \inv{\mF} \vg \geq 0$.
\end{lemma}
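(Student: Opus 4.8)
The plan is to exploit the elementary fact that the inverse of a positive definite matrix is itself positive definite, so that the quadratic form $\vg^T \inv{\mF} \vg$ is automatically non-negative (indeed strictly positive unless $\vg = \vzero$). Concretely, since $\mF \succ 0$ we know $\mF$ is invertible, so $\inv{\mF}$ is well-defined; I would then set $\vv := \inv{\mF}\vg$ and rewrite
\begin{equation}
    \eta = \vg^T \inv{\mF} \vg = (\mF \vv)^T \inv{\mF} (\mF \vv) = \vv^T \mF^T \inv{\mF} \mF \vv = \vv^T \mF \vv \geq 0,
\end{equation}
where the last step uses symmetry of $\mF$ (hence of $\inv{\mF}$) and the defining property of positive definiteness applied to the vector $\vv$.

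Alternatively, and perhaps more transparently given the geometric discussion in \cref{sec:peudoinverse}, one can observe that the earlier identity $\eta = \frac{1}{N}\|\mP_\Theta(\vg_y)\|_2^2$ already exhibits $\eta$ as a squared Euclidean norm (up to the positive factor $1/N$), which is manifestly non-negative. Either route is a single line; I would pick the purely algebraic one since it does not require the pseudoinverse machinery and holds for any positive definite curvature approximation, not just $\mF = \frac{1}{N}\mJ^T\mJ$.

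There is no real obstacle here: the statement is a standard consequence of positive definiteness, and the only thing to be careful about is that $\vv = \inv{\mF}\vg$ is nonzero precisely when $\vg$ is nonzero (because $\inv{\mF}$ is injective), which gives the refinement that $\eta = 0$ iff $\vg = \vzero$. The lemma is stated only for later convenience in the proof of \cref{theorem:rate_bound}, where it supplies the base inequality $\eta_0 \geq 0$ used to obtain a lower bound on $\eta_1$ after the first addition.
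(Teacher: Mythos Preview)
Your proposal is correct and essentially the same approach as the paper's: the paper simply notes that $\mF \succ 0$ implies $\inv{\mF} \succ 0$, hence $\vv^T \inv{\mF} \vv \geq 0$ for all $\vv$, which is the same one-line positive-definiteness argument you give (your substitution $\vv = \inv{\mF}\vg$ just reduces to positive definiteness of $\mF$ instead of $\inv{\mF}$, a cosmetic difference). Your alternative via the projection-norm identity is also valid but not used in the paper's proof.
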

\begin{proof}
If $\mF \succ 0$, then $\inv{\mF} \succ 0$,
and $\vv^T \inv{\mF} \vv \geq 0$ for all $\vv$.
\end{proof}

Considering the effect of the expansion threshold $\tau$ we obtain the following bound:
\begin{lemma}
\label{lemma:tau_rate_bound}
Let $\eta$ have initial value $\eta_0$ and be bounded above by $\lambda \geq \eta$.
If the threshold $\tau$ guarantees that $\eta_i > (1 + \tau) \eta_{i-1}$ for the $i$-th addition,
then the maximum number of successive additions $N_s$ is bounded by
$N_s < \frac{\ln \lambda - \ln \eta_0}{\ln (1 + \tau)}$.
\end{lemma}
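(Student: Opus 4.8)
The plan is to iterate the per-step multiplicative lower bound and then collide it with the uniform upper bound $\lambda$. First I would record two small preliminaries: since $\tau$ is a strictly positive expansion threshold we have $1+\tau>1$, hence $\ln(1+\tau)>0$; and we may assume $\eta_0>0$, because if $\eta_0=0$ then by Lemma~\ref{lemma:positive_eta} together with the hypothesis $\eta_i>(1+\tau)\eta_{i-1}$ no addition can occur at all, so the statement is vacuous (and in the eventual application to Theorem~1, $\eta_0$ is anyway replaced by the strictly positive quantity $\eta_1>\alpha$, which removes this edge case entirely).

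Next, by a one-line induction on $i$ using the hypothesis $\eta_i>(1+\tau)\eta_{i-1}$ at each step, I would establish
\[
\eta_{N_s} > (1+\tau)^{N_s}\,\eta_0 .
\]
Since $\eta$ is bounded above by $\lambda$ by assumption, in particular $\eta_{N_s}\le\lambda$, so $(1+\tau)^{N_s}\eta_0<\lambda$, i.e. $(1+\tau)^{N_s}<\lambda/\eta_0$. Taking natural logarithms of both (positive) sides — which preserves the strict inequality as $\ln$ is strictly increasing — gives $N_s\ln(1+\tau)<\ln\lambda-\ln\eta_0$, and dividing by $\ln(1+\tau)>0$ yields the claimed bound $N_s<(\ln\lambda-\ln\eta_0)/\ln(1+\tau)$.

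There is essentially no obstacle here: the argument is just the observation that geometric growth at rate $1+\tau$ against a fixed ceiling $\lambda$ can persist for only logarithmically many steps. The only points requiring a moment's care are the degenerate case $\eta_0=0$ (dispatched above) and making explicit that it is the positivity of $\tau$, via $\ln(1+\tau)>0$, that lets the final division preserve the direction of the inequality.
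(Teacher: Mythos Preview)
Your proof is correct and follows essentially the same approach as the paper: iterate the per-step inequality to obtain $\eta_{N_s} > (1+\tau)^{N_s}\eta_0$, invoke the upper bound $\lambda$, and take logarithms. You are in fact slightly more careful than the paper in explicitly handling the edge case $\eta_0=0$ and the positivity of $\ln(1+\tau)$.
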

\begin{proof}
Due to the threshold $\tau$, $\eta$ grows at least exponentially: $\eta_i > (1 + \tau)^i \eta_0$.
But $\eta$ is bounded: $\lambda \geq \eta_i > (1 + \tau)^i \eta_0$.
Since $\ln$ is monotonic, we may take logarithms:
$\ln \lambda > i \ln (1 + \tau) + \ln \eta_0$.
and rearrange to get $i < \frac{\ln \lambda - \ln \eta_0}{\ln (1 + \tau)}$ for all additions $i$.
This true for every $i$-th addition which is accepted, and so in particular also true for the last $N_s$-th addition.
\end{proof}

Considering also the effect of the stopping criterion $\alpha$ we obtain theorem 1:
\begin{theorem}
\label{theorem:rate_bound}
If the stopping criterion $\alpha$ guarantees that $\eta_{i} - \eta_{i-1} > \alpha$,
then the maximum number of successive additions $N_s$ is either $0$, or bounded by
$N_s < 1 + \frac{\ln \lambda - \ln \alpha}{\ln (1 + \tau)}$.
\end{theorem}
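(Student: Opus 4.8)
The plan is to combine the two lemmas already established: Lemma~\ref{lemma:positive_eta}, which gives $\eta_0 \geq 0$, and Lemma~\ref{lemma:tau_rate_bound}, which bounds the number of successive additions in terms of whatever value $\eta$ holds at the start of a run. The only subtlety is that $\eta_0$ may be arbitrarily close to $0$ when the network is already near convergence, so plugging $\eta_0$ directly into Lemma~\ref{lemma:tau_rate_bound} yields a vacuous bound; the stopping criterion $\alpha$ is precisely what rescues us, but it is an \emph{additive} guarantee and must first be converted into a multiplicative lower bound on the value of $\eta$ at a later point in the run.

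First I would dispose of the trivial case: if no addition is ever accepted then $N_s = 0$ and there is nothing to prove, so assume $N_s \geq 1$, i.e. at least the first addition occurs. By hypothesis the stopping criterion gives $\eta_1 - \eta_0 > \alpha$, and by Lemma~\ref{lemma:positive_eta} we have $\eta_0 \geq 0$; summing these yields the key lower bound $\eta_1 > \alpha$, which is independent of $\eta_0$. Next I would re-index the process: the additions numbered $2, 3, \dots, N_s$ form a run of $N_s - 1$ successive additions whose starting value of $\eta$ is $\eta_1$, still bounded above by $\lambda$ and still obeying the multiplicative growth $\eta_i > (1+\tau)\eta_{i-1}$ enforced by $\tau$. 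Applying Lemma~\ref{lemma:tau_rate_bound} to this shortened run gives $N_s - 1 < (\ln\lambda - \ln\eta_1)/\ln(1+\tau)$; since $\ln$ is increasing and $\eta_1 > \alpha$ we have $-\ln\eta_1 < -\ln\alpha$, hence $N_s - 1 < (\ln\lambda - \ln\alpha)/\ln(1+\tau)$, and adding $1$ to both sides yields the claimed bound. Equivalently, one can skip re-invoking the lemma and substitute the chain $\lambda \geq \eta_{N_s} > (1+\tau)^{N_s-1}\eta_1 > (1+\tau)^{N_s-1}\alpha$ into $\ln$ directly.

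I do not anticipate a genuine obstacle: the argument is a short manipulation once one notices that the first step must be peeled off before the geometric-growth argument of Lemma~\ref{lemma:tau_rate_bound} can be applied with a usable lower bound in place of $\eta_0$. The one thing to be careful about is keeping the case split explicit (the "either $0$" clause in the statement), since for $N_s = 0$ the quantity $\ln\eta_1$ is not even defined.
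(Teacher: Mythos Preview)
Your proposal is correct and follows essentially the same approach as the paper: split off the case $N_s=0$, use the stopping criterion together with Lemma~\ref{lemma:positive_eta} to obtain $\eta_1 > \alpha$, and then apply Lemma~\ref{lemma:tau_rate_bound} with $\eta_1$ (hence $\alpha$) in place of $\eta_0$ to bound the remaining $N_s-1$ additions. Your write-up is actually a bit more explicit than the paper's about the re-indexing step, but the argument is the same.
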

\begin{proof}
Either $N_s = 0$, or there is a first addition with natural expansion score $\eta_1$ for which
$\eta_1 - \eta_0 > \alpha$.
From lemma \ref{lemma:positive_eta} we then have $\eta_1 > \alpha$.
We may then substitute $\alpha$ into lemma \ref{lemma:tau_rate_bound} in place of $\eta_0$ to obtain
a bound on further additions, yielding
$N_s < 1 + \frac{\ln \lambda - \ln \alpha}{\ln (1 + \tau)}$.
\end{proof}



This theorem is important because it guarantees that SENN will add a limited number of neurons or layers before continuing training.
Intuitively, this is because it rapidly becomes the case that any new neuron is either not relevant to rapidly decreasing the loss, or is redundant with some already extant neuron.

\subsection{ Theorem 2: Lower bound on increase in natural expansion score}
We now prove theorem 2 of the main body, concerning a lower bound on the increase in natural expansion score $\eta$ due to the addition of new proposed neuron(s) to a layer.
Let the joint activations $\va = \begin{bsmallmatrix} \va_c \\ \va_p \end{bsmallmatrix}$ of the current and proposed neurons have second moment
$\expect{\va \va^T} = \mA = \begin{bsmallmatrix} \mA_{c} & \mC_{cp} \\ \mC_{pc} & \mA_{p} \end{bsmallmatrix}$.
We will assume the Fisher matrix $\mF$ for the layer to which neurons are to be added to factorize as $\mF = \mS \otimes \mA$, where $\mS \succ 0$ is positive definite.
We first derive a convenient form of a known result discussed in, for example, \citet{von_mises1964},
related to the joint covariance of multivariate Gaussian distributions.
\begin{lemma}
\label{lemma:block_residual_inverse}
Let $\hat{\mA}_p = \mA_p - \mC_{pc} \inv{\mA}_c \mC_{cp}$ be the Schur complement of $\mA_c$ in $\mA$.
Let also $\vv = \begin{bsmallmatrix} \vv_c \\ \vv_p \end{bsmallmatrix}$ be an arbitrary vector,
and $\mR$ be the linear operator defined by $\mR \vv = \vv_p - \mC_{pc} \inv{\mA}_c \vv_c$,
i.e. the residual part of $\vv_p$ not predicted by $\vv_c$.
Then, $
\vv^T \inv{\mA} \vv = \vv_c^T \inv{\mA}_c \vv_c +
 (\mR \vv)^T \inv{\hat{\mA}}_p \mR \vv
$.
\end{lemma}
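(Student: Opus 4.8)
The plan is to prove Lemma~\ref{lemma:block_residual_inverse} by exhibiting an explicit block factorization of $\mA$ and inverting it. Concretely, I would write the block-LDU decomposition
\begin{equation*}
\mA = \begin{bmatrix} \mI & \mathbf{0} \\ \mC_{pc}\inv{\mA}_c & \mI \end{bmatrix}
\begin{bmatrix} \mA_c & \mathbf{0} \\ \mathbf{0} & \hat{\mA}_p \end{bmatrix}
\begin{bmatrix} \mI & \inv{\mA}_c \mC_{cp} \\ \mathbf{0} & \mI \end{bmatrix},
\end{equation*}
which is the standard Gaussian-elimination factorization and can be verified by direct block multiplication (using $\mC_{pc}=\mC_{cp}^T$ since $\mA$ is a second moment matrix). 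Inverting a product reverses the order and inverts each factor; the two triangular factors invert simply by negating their off-diagonal block, and the middle block-diagonal factor inverts blockwise to $\mathrm{diag}(\inv{\mA}_c, \inv{\hat{\mA}}_p)$. This gives a closed form for $\inv{\mA}$ as a product of three block-triangular/diagonal matrices.

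Next I would form the scalar $\vv^T \inv{\mA} \vv$ by multiplying $\vv$ through this product. The key observation is that the rightmost (upper-triangular) inverse factor acting on $\vv$ produces $\begin{bsmallmatrix} \vv_c - \inv{\mA}_c\mC_{cp}\vv_p \\ \vv_p \end{bsmallmatrix}$ — wait, more carefully: I should track which triangular factor hits $\vv$ first. Reading the inverse as $\inv{\mA} = U^{-1} D^{-1} L^{-1}$ with $L$ the lower factor above, then $L^{-1}\vv$ has top block $\vv_c$ and bottom block $\vv_p - \mC_{pc}\inv{\mA}_c\vv_c = \mR\vv$, precisely the residual operator in the statement. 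By symmetry $\vv^T U^{-1}$ yields the transpose of the same thing (since $U = L^T$ here because $\mA$ is symmetric), so $\vv^T\inv{\mA}\vv = (L^{-1}\vv)^T D^{-1} (L^{-1}\vv) = \begin{bsmallmatrix}\vv_c \\ \mR\vv\end{bsmallmatrix}^T \mathrm{diag}(\inv{\mA}_c,\inv{\hat{\mA}}_p)\begin{bsmallmatrix}\vv_c \\ \mR\vv\end{bsmallmatrix} = \vv_c^T\inv{\mA}_c\vv_c + (\mR\vv)^T\inv{\hat{\mA}}_p\mR\vv$, which is exactly the claim.

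I should also check that the objects involved are well-defined: $\mA_c \succ 0$ so $\inv{\mA}_c$ exists, and then $\hat{\mA}_p \succ 0$ as well (a Schur complement of a PSD/PD matrix is PD when the relevant diagonal block is PD), so $\inv{\hat{\mA}}_p$ exists; these facts also make the decomposition legitimate rather than merely formal. The main obstacle — really the only subtlety — is bookkeeping the transposes correctly so that $\vv^T U^{-1}$ genuinely equals $(L^{-1}\vv)^T$ up to the block structure; this relies on $U = L^T$, which holds because $\mA$ is symmetric. Once that is pinned down, the result is a one-line consequence of the factorization, and in the application to Theorem~2 one substitutes $\vv$ equal to the appropriate (Kronecker-structured) gradient vector and reads off the residual-gradient term $\vg_r$ as the nonnegative increment, since $\inv{\hat{\mA}}_p \succeq 0$ forces $(\mR\vv)^T\inv{\hat{\mA}}_p\mR\vv \geq 0$.
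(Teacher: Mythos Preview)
Your proposal is correct and follows essentially the same approach as the paper: both arguments write the block LDU factorization of $\mA$, invert it factorwise, and then read off the quadratic form $\vv^T\inv{\mA}\vv$ as the sum of the two block-diagonal contributions. Your explicit use of the symmetry $U=L^T$ to identify $\vv^T U^{-1}=(L^{-1}\vv)^T$ is a slight presentational shortcut over the paper's direct multiplication, but the underlying computation is identical.
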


\begin{proof}
The following may be obtained by performing a block LDU decomposition:
\begin{equation}
\mA =
\begin{bmatrix}
\mA_{c} & \mC_{cp} \\ \mC_{pc} & \mA_{p}
\end{bmatrix} = 
\begin{bmatrix}
\mI_c & 0 \\
\mC_{pc} \inv{\mA}_c & \mI_p
\end{bmatrix}
\begin{bmatrix}
\mA_c & 0 \\
0 & \hat{\mA}_p
\end{bmatrix}
\begin{bmatrix}
\mI_c & \inv{\mA}_c \mC_{cp} \\
0 & \mI_p
\end{bmatrix}
\end{equation}
which we may then use to decompose $\inv{\mA}$:
\begin{equation}
\inv{\mA} =
\begin{bmatrix}
\mA_{c} & \mC_{cp} \\ \mC_{pc} & \mA_{p}
\end{bmatrix}^{-1} = 
\begin{bmatrix}
\mI_c & -\inv{\mA}_c \mC_{cp} \\
0 & \mI_p
\end{bmatrix}
\begin{bmatrix}
\mA_c & 0 \\
0 & \hat{\mA}_p
\end{bmatrix}
\begin{bmatrix}
\mI_c & 0 \\
-\mC_{pc} \inv{\mA}_c & \mI_p
\end{bmatrix}
\end{equation}
The desired result then follows by substitution into $\vv^T \inv{\mA} \vv$:
\begin{equation}
\begin{split}
\vv^T \inv{\mA} \vv & =
\begin{bmatrix} \vv_c & \vv_p \end{bmatrix}
\begin{bmatrix}
\mI_c & -\inv{\mA}_c \mC_{cp} \\
0 & \mI_p
\end{bmatrix}
\begin{bmatrix}
\mA_c^{-1} & 0 \\
0 & \hat{\mA}_p^{-1}
\end{bmatrix}
\begin{bmatrix}
\mI_c & 0 \\
-\mC_{pc} \inv{\mA}_c & \mI_p
\end{bmatrix}
\begin{bmatrix} \vv_c \\ \vv_p \end{bmatrix} \\ & =
\vv_c^T \inv{\mA}_c \vv_c + 
(\vv_p - \mC_{pc} \inv{\mA}_c \vv_c)^T \hat{\mA}_p^{-1} (\vv_p - \mC_{pc} \inv{\mA}_c \vv_c)
\end{split}
\end{equation}

\end{proof}

Recall from section 3.6 that $\eta$ may be expressed as a trace:
$\eta = \Tr[\mS^{-1} \expect{\vg \va^T} \mA^{-1} \expect{\va \vg^T}]$
where $\vg$ is the derivative of the loss with respect to the outputs (i.e. layer pre-activations) of the linear transform.
\begin{corollary}
\label{corollary:schur_form_eta}
We can use lemma \ref{lemma:block_residual_inverse} to write the increase in natural expansion score $\Delta \eta$ as
\begin{equation}
    \begin{split}
    \Delta \eta & = \Tr[\mS^{-1} \expect{\vg \va^T} \mA^{-1} \expect{\va \vg^T}]
    - \Tr[\mS^{-1} \expect{\vg \va_c} \mA_c^{-1} \expect{\va_c \vg^T}] \\
    & = \Tr[\mS^{-1} \expect{\vg (\mR \va)^T} \hat{\mA}_p^{-1} \expect{(\mR \va) \vg^T}]
    \end{split}
\end{equation}
where we can take $\mR$ inside the expectations by linearity.
\end{corollary}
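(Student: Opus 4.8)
The plan is to read Corollary~\ref{corollary:schur_form_eta} as nothing more than Lemma~\ref{lemma:block_residual_inverse} transported through the trace. The first equality in the displayed equation is just the definition $\Delta\eta = \eta_p - \eta_0$ with each score written in the KFAC trace form $\eta = \Tr[\mS^{-1}\expect{\vg\va^T}\inv{\mA}\expect{\va\vg^T}]$: enlarging the layer by the proposed neurons changes the input-activation Kronecker factor from $\mA_c$ (current neurons only) to the joint second moment $\mA$, and correspondingly $\expect{\vg\va_c^T}$ to $\expect{\vg\va^T}$, while the output-side factor $\mS$ --- the pre-activation gradient second moment of the following layer --- is unaffected by the expansion. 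So the entire content of the corollary is the second equality: collapsing the difference of these two traces into the single Schur-complement trace.

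First I would recall the block-LDU factorization of $\inv{\mA}$ exhibited inside the proof of Lemma~\ref{lemma:block_residual_inverse}: it can be written $\inv{\mA} = \mL^T\,\mathrm{diag}(\inv{\mA}_c,\,\hat{\mA}_p^{-1})\,\mL$, where $\mL$ is block lower-triangular with identity diagonal blocks and $(p,c)$ block $-\mC_{pc}\inv{\mA}_c$, so that $\mL$ sends $\va$ to the stacked vector whose current block is $\va_c$ and whose proposed block is $\mR\va = \va_p - \mC_{pc}\inv{\mA}_c\va_c$. Substituting this factorization into $\Tr[\mS^{-1}\expect{\vg\va^T}\inv{\mA}\expect{\va\vg^T}]$ and pushing the fixed linear operator $\mL$ inside the expectations (this is exactly the ``take $\mR$ inside by linearity'' remark: $\mL\,\expect{\va\vg^T} = \expect{(\mL\va)\vg^T}$), the sandwich by $\mathrm{diag}(\inv{\mA}_c,\hat{\mA}_p^{-1})$ makes the trace split as a sum over its two diagonal blocks. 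The first block reproduces $\Tr[\mS^{-1}\expect{\vg\va_c^T}\inv{\mA}_c\expect{\va_c\vg^T}] = \eta_0$ and the second gives $\Tr[\mS^{-1}\expect{\vg(\mR\va)^T}\hat{\mA}_p^{-1}\expect{(\mR\va)\vg^T}]$; subtracting $\eta_0$ from both sides yields the claimed identity.

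I expect the only delicate point to be the bookkeeping of the KFAC structure rather than any genuine estimate: one must be careful that the ``before'' and ``after'' scores are comparable term-by-term, which relies on the expansion touching only the $\mA$ factor and leaving $\mS$ fixed, and one should note in passing that $\hat{\mA}_p \succ 0$ (so that $\hat{\mA}_p^{-1}$ in fact exists) follows from $\mA \succ 0$, which the section already assumes and which a small damping term on the second-moment matrices enforces in practice. Everything else is cyclicity of the trace together with linearity of expectation and of $\mR$. As a downstream remark, the main-body lower bound $\Delta\eta' \leq \Delta\eta$ then drops out of this corollary by replacing $\hat{\mA}_p^{-1}$ with the Loewner-smaller $\mA_p^{-1} \preceq \hat{\mA}_p^{-1}$ (since $\mC_{pc}\inv{\mA}_c\mC_{cp} \succeq 0$) and using that the trace of a product of two positive semidefinite matrices is non-negative; but that comparison is logically separate from the corollary itself.
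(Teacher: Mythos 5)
Your proof is correct and follows essentially the same route as the paper: the first equality is just the KFAC trace form of $\eta$ before and after expansion (with $\mS$ unchanged and only the $\mA$-factor and gradient--activation correlation enlarged), and the second equality comes from substituting the block-LDU factorization of $\inv{\mA}$ exhibited in Lemma~\ref{lemma:block_residual_inverse} into the trace and using linearity of expectation to split it into the $\eta_0$ block plus the Schur-complement block. Your extra remarks --- spelling out the matrix-level version of the lemma's vector identity and noting that $\hat{\mA}_p$ is invertible when $\mA \succ 0$ --- simply make explicit what the paper leaves implicit.
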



It is computationally convenient for us to be able to have an expression in terms of residual gradients instead of residual activations, so we note the following:
\begin{lemma}
\label{lemma:residual_interchange}
$\expect{\vg (\mR \va)^T} = \expect{\vg_r \va_p^T}$
where $\vg_r = \vg - \expect{\vg \va_c^T} \mA_c^{-1} \va_c$ is the residual gradient.
\end{lemma}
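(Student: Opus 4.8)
The plan is to prove this by a direct computation: expand both sides, pull the constant matrices outside the expectations by linearity, and check that the two resulting expressions coincide. First I would write out the left-hand side using the definition $\mR \va = \va_p - \mC_{pc} \inv{\mA}_c \va_c$ from Lemma~\ref{lemma:block_residual_inverse}. Transposing, and using that $\mA_c$ (hence $\inv{\mA}_c$) is symmetric, gives $(\mR\va)^T = \va_p^T - \va_c^T \inv{\mA}_c \mC_{pc}^T$, so that
\begin{equation}
\expect{\vg (\mR \va)^T} = \expect{\vg \va_p^T} - \expect{\vg \va_c^T}\, \inv{\mA}_c\, \mC_{pc}^T .
\end{equation}
Since $\mC_{pc} = \expect{\va_p \va_c^T}$, we have $\mC_{pc}^T = \mC_{cp} = \expect{\va_c \va_p^T}$, so the right-hand term becomes $\expect{\vg \va_c^T}\,\inv{\mA}_c\, \expect{\va_c \va_p^T}$.

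Second, I would expand the right-hand side using $\vg_r = \vg - \expect{\vg \va_c^T} \inv{\mA}_c \va_c$, again pulling the constant matrix $\expect{\vg \va_c^T}\inv{\mA}_c$ outside the expectation:
\begin{equation}
\expect{\vg_r \va_p^T} = \expect{\vg \va_p^T} - \expect{\vg \va_c^T}\, \inv{\mA}_c\, \expect{\va_c \va_p^T} .
\end{equation}
The two expressions are identical, which proves the claim.

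There is no genuine obstacle here: the only point that requires a moment's care is the bookkeeping of transposes, specifically the identification $\mC_{pc}^T = \mC_{cp}$ together with the symmetry of $\inv{\mA}_c$, which is precisely what makes ``residual activations paired with raw gradients'' and ``raw activations paired with residual gradients'' agree. I would state this explicitly so the reader sees why the interchange is legitimate, and note that it is this identity which lets Corollary~\ref{corollary:schur_form_eta} be rewritten in the computationally convenient residual-gradient form used in Theorem~\ref{theorem:rate_bound}'s companion bound.
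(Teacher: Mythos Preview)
Your proof is correct and is essentially the same computation as the paper's: both expand $\mR\va$ and $\vg_r$, pull the constant matrices $\expect{\vg\va_c^T}\inv{\mA}_c$ and $\mC_{pc}=\expect{\va_p\va_c^T}$ outside the expectation by linearity, and meet at the common expression $\expect{\vg\va_p^T}-\expect{\vg\va_c^T}\inv{\mA}_c\expect{\va_c\va_p^T}$. The only cosmetic difference is that the paper chains from the left-hand side directly to the right-hand side in four lines, whereas you expand both sides separately and compare; your added remarks about the symmetry of $\inv{\mA}_c$ and $\mC_{pc}^T=\mC_{cp}$ are fine but not strictly needed.
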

\begin{proof}
\begin{align*}
    \expect{\vg (\mR \va)^T} & = \expect{\vg (\va_p - \expect{\va_p \va_c^T} \mA_c^{-1} \va_c)^T} \\
    & = \expect{\vg \va_p^T} - \expect{\vg \va_c^T} \mA_c^{-1} \expect{\va_c \va_p^T} \\
    & = \expect{(\vg - \expect{\vg \va_c} \mA_c^{-1} \va_c) \va_p^T} \\
    & = \expect{\vg_r \va_p^T}
\end{align*}
\end{proof}

Finally, we establish the following relationship between $\mA_p^{-1}$ and $\hat{\mA}_p^{-1}$:
\begin{lemma}
\label{lemma:schur_complement_succeq}
$\hat{\mA}_p^{-1} - \mA_p^{-1}= (\mA_p - \mC_{pc} \inv{\mA}_c \mC_{cp})^{-1} - \mA_p^{-1} \succeq 0$.
\end{lemma}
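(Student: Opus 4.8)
The plan is to read the claim as the combination of two standard facts: that the Schur complement $\hat{\mA}_p = \mA_p - \mC_{pc}\inv{\mA}_c\mC_{cp}$ lies below $\mA_p$ in the Loewner order, and that inversion reverses the Loewner order on the positive definite cone. So the only things to verify carefully are the relevant positive (semi-)definiteness facts; the rest is a one-line conjugation argument.

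\textbf{Step 1: the Schur complement is dominated by $\mA_p$.} Since $\mA = \expect{\va\va^T}$ is positive definite (the standing assumption under which $\mF = \mS \otimes \mA$ is invertible in this proof; in practice guaranteed by the same regularization that makes $\mF^{-1}$ exist), its leading principal block $\mA_c$ is positive definite, hence $\inv{\mA}_c \succ 0$. Because $\mA$ is symmetric, $\mC_{cp} = \mC_{pc}^T$, and therefore $\mC_{pc}\inv{\mA}_c\mC_{cp} = \mC_{pc}\inv{\mA}_c\mC_{pc}^T \succeq 0$, being of the form $\mX\inv{\mA}_c\mX^T$ with $\inv{\mA}_c \succ 0$. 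Subtracting this positive semi-definite matrix gives $\hat{\mA}_p \preceq \mA_p$.

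\textbf{Step 2: $\hat{\mA}_p$ is positive definite.} The block LDU decomposition already written out in the proof of Lemma~\ref{lemma:block_residual_inverse} exhibits $\mA$ as congruent to $\mathrm{diag}(\mA_c, \hat{\mA}_p)$; congruence preserves inertia and $\mA \succ 0$, so $\hat{\mA}_p \succ 0$. Hence $0 \prec \hat{\mA}_p \preceq \mA_p$ and both matrices are invertible.

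\textbf{Step 3: invert and reverse the order.} Apply anti-monotonicity of inversion on positive definite matrices: from $\hat{\mA}_p \preceq \mA_p$, conjugate by $\mA_p^{-1/2}$ to get $\mA_p^{-1/2}\hat{\mA}_p\mA_p^{-1/2} \preceq \mI$; this positive definite matrix has all eigenvalues at most one, so its inverse $\mA_p^{1/2}\inv{\hat{\mA}}_p\mA_p^{1/2}$ has all eigenvalues at least one, i.e. $\mA_p^{1/2}\inv{\hat{\mA}}_p\mA_p^{1/2} \succeq \mI$; conjugating back by $\mA_p^{-1/2}$ yields $\inv{\hat{\mA}}_p \succeq \inv{\mA}_p$, which is exactly $\hat{\mA}_p^{-1} - \mA_p^{-1} \succeq 0$. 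I expect no real obstacle: the only point needing attention is the implicit positive-definiteness of $\mA$ (equivalently of $\mA_c$ and $\mA_p$); if one only has $\mA \succeq 0$, the statement should be read with a ridge term $\mA_p + \epsilon\mI$ or with pseudoinverses.
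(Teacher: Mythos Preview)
Your argument is correct and takes a different route from the paper. The paper expands $\hat{\mA}_p^{-1}$ as the Neumann series $\sum_{n\geq 0}\inv{\mA}_p(\mC_{pc}\inv{\mA}_c\mC_{cp}\inv{\mA}_p)^n$, identifies the $n=0$ term as $\inv{\mA}_p$, and asserts that each remaining term is positive semi-definite. You instead prove the two standard facts $\hat{\mA}_p \preceq \mA_p$ and $\hat{\mA}_p \succ 0$, then invoke operator anti-monotonicity of inversion (which you even reprove via conjugation by $\mA_p^{-1/2}$).

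Your approach is arguably cleaner: it avoids the convergence question for the Neumann series (which needs the spectral radius of $\mC_{pc}\inv{\mA}_c\mC_{cp}\inv{\mA}_p$ to be below $1$, a point the paper leaves implicit) and it does not require checking that the individual series terms $\inv{\mA}_p(\mC_{pc}\inv{\mA}_c\mC_{cp}\inv{\mA}_p)^n$ are symmetric positive semi-definite, which is true but not entirely obvious without writing them as $\mA_p^{-1/2}(\mA_p^{-1/2}\mC_{pc}\inv{\mA}_c\mC_{cp}\mA_p^{-1/2})^n\mA_p^{-1/2}$. The paper's series argument, on the other hand, gives an explicit formula for the gap $\hat{\mA}_p^{-1} - \mA_p^{-1}$, which could in principle be useful for quantitative refinements. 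Your caveat about needing $\mA \succ 0$ is well placed and matches the paper's standing assumption that $\mF$ (and hence $\mA$) is positive definite.
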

\begin{proof}
The matrix inverse $\hat{\mA}_p^{-1}$ can be expanded as the following power series
\begin{equation}
    \hat{\mA}_p^{-1} = (\mA_p - \mC_{pc} \inv{\mA}_c \mC_{cp})^{-1} = 
    \sum_{n=0}^\infty \inv{\mA}_p (\mC_{pc} \inv{\mA}_c \mC_{cp} \inv{\mA}_p)^n
\end{equation}
We observe that this is a sum of positive semi-definite matrices, and truncate the series at $n=0$ and rearrange:
\begin{equation}
    \hat{\mA}_p^{-1} - \inv{\mA}_p = \sum_{n=1}^\infty \inv{\mA}_p (\mC_{pc} \inv{\mA}_c \mC_{cp} \inv{\mA}_p)^n \succeq 0
\end{equation}
\end{proof}

We may now prove the main body's theorem 2.
\begin{theorem}
$\Delta \eta'$ is a lower bound on the increase in natural expansion score $\Delta \eta$ due to the addition of some proposed neurons $p$:
\begin{equation}
    \Delta \eta
    \geq \Delta \eta' = \Tr[\mS^{-1} \expect{\vg_r \va_p^T} \mA_p^{-1} \expect{\va_p \vg_r^T}]
\end{equation}
\end{theorem}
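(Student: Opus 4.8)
The plan is to assemble the three ingredients already in hand—Corollary~\ref{corollary:schur_form_eta}, Lemma~\ref{lemma:residual_interchange}, and Lemma~\ref{lemma:schur_complement_succeq}—and thereby reduce the claim to a single positivity statement about a trace. First I would take the expression for $\Delta\eta$ supplied by Corollary~\ref{corollary:schur_form_eta}, namely $\Delta\eta = \Tr[\mS^{-1}\expect{\vg(\mR\va)^T}\hat{\mA}_p^{-1}\expect{(\mR\va)\vg^T}]$, and rewrite both outer factors using Lemma~\ref{lemma:residual_interchange} ($\expect{\vg(\mR\va)^T} = \expect{\vg_r\va_p^T}$, and its transpose), obtaining
\begin{equation}
\Delta\eta = \Tr[\mS^{-1}\,\expect{\vg_r\va_p^T}\,\hat{\mA}_p^{-1}\,\expect{\va_p\vg_r^T}].
\end{equation}
Subtracting the definition $\Delta\eta' = \Tr[\mS^{-1}\expect{\vg_r\va_p^T}\mA_p^{-1}\expect{\va_p\vg_r^T}]$ then collapses everything to the clean identity
\begin{equation}
\Delta\eta - \Delta\eta' = \Tr\bigl[\mS^{-1}\,\expect{\vg_r\va_p^T}\,(\hat{\mA}_p^{-1} - \mA_p^{-1})\,\expect{\va_p\vg_r^T}\bigr],
\end{equation}
so the whole theorem reduces to showing this trace is non-negative.

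For that I would invoke Lemma~\ref{lemma:schur_complement_succeq}, which gives $\hat{\mA}_p^{-1} - \mA_p^{-1} \succeq 0$. Conjugating a positive semi-definite matrix by the (rectangular) matrix $\expect{\vg_r\va_p^T}$ preserves positive semi-definiteness, so $\mN := \expect{\vg_r\va_p^T}(\hat{\mA}_p^{-1} - \mA_p^{-1})\expect{\va_p\vg_r^T} \succeq 0$, a matrix on the output (pre-activation) space. Since $\mS \succ 0$ by assumption, $\mS^{-1/2}$ exists and $\Tr[\mS^{-1}\mN] = \Tr[\mS^{-1/2}\mN\mS^{-1/2}] \geq 0$, because $\mS^{-1/2}\mN\mS^{-1/2} \succeq 0$ by congruence and the trace of a positive semi-definite matrix is non-negative. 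Hence $\Delta\eta - \Delta\eta' \geq 0$, which is exactly the claimed bound.

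I do not expect a real obstacle here: all the analytic work (the block LDU decomposition, the geometric series for the Schur complement, the residual-gradient identity) has been front-loaded into the preceding lemmas, so this final step is pure assembly. The two things to be careful about are bookkeeping—$\expect{\vg_r\va_p^T}$ is rectangular (output-dimension by proposal-dimension), so one should confirm $\mN$ is square of output size and the trace is well defined—and not conflating the Schur complement $\hat{\mA}_p = \mA_p - \mC_{pc}\inv{\mA}_c\mC_{cp}$ with $\mA_p$ itself. Indeed the entire content of the bound is that passing from $\hat{\mA}_p^{-1}$ to $\mA_p^{-1}$ can only shrink the score: $\Delta\eta'$ is the gain one would compute by pretending the proposed activations $\va_p$ are uncorrelated with the existing ones $\va_c$, and correctly accounting for that correlation (via the Schur complement) can only help, so $\Delta\eta'$ underestimates the true improvement $\Delta\eta$.
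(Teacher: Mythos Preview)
Your proposal is correct and follows essentially the same route as the paper: apply Corollary~\ref{corollary:schur_form_eta} and Lemma~\ref{lemma:residual_interchange} to write $\Delta\eta$ with $\hat{\mA}_p^{-1}$ in the middle, subtract $\Delta\eta'$, and use Lemma~\ref{lemma:schur_complement_succeq} to conclude non-negativity. The only cosmetic difference is that the paper phrases the final positivity step as ``$\Tr[\mS^{-1}\,\cdot\,(\hat{\mA}_p^{-1}-\mA_p^{-1})\,\cdot\,]$ is a squared norm under the Kronecker product $\mS^{-1}\otimes(\hat{\mA}_p^{-1}-\mA_p^{-1})$,'' whereas you argue via conjugation and $\mS^{-1/2}$; these are equivalent justifications of the same inequality.
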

\begin{proof}
Substituting lemma \ref{lemma:residual_interchange} into corollary \ref{corollary:schur_form_eta} we have
$\Delta \eta = \Tr[\mS^{-1} \expect{\vg_r \va^T} \hat{\mA}_p^{-1} \expect{\va \vg_r^T}]$.
The difference between $\Delta \eta$ and $\Delta \eta'$ is given by
$\Delta \eta - \Delta \eta' = \Tr[\mS^{-1} \expect{\vg_r \va^T} (\hat{\mA}_p^{-1} - \mA_p^{-1}) \expect{\va \vg_r^T}]$.
This is the squared norm of $\expect{\vg_r \va^T}$ as a vector according to the Kronecker product
$\mS^{-1} \otimes (\hat{\mA}_p^{-1} - \mA_p^{-1})$.
The first factor is positive semi-definite by assumption, the second by lemma \ref{lemma:schur_complement_succeq},
and the Kronecker product of positive semi-definite matrices is positive semi-definite.
Therefore $\Delta \eta - \Delta \eta' \geq 0$ and so $\Delta \eta \geq \Delta \eta'$.
\end{proof}
The significance of this lower bound on $\Delta \eta$ is that $\vg_r$ and $\mS^{-1}$ may be computed once,
and then used to optimize very many proposals with different activations $\va_p$.
That is, performing $N$ steps of gradient descent to optimize proposed neurons $p$ scales linearly in the evaluation cost of $\va_p$ and $\mA_p^{-1}$.
These linear costs are unaffected by the number of neurons currently in the layer being added to,
and unaffected by the total number of layers in the network.






\section{Hyperparameters and experimental details}

All experiments were run on a single Nvidia A100 or V100 GPU, using no longer than two days each, except for the cyclic learning rate cifar10 experiment which we intentionally let run for longer than necessary.
Our implementation uses the JAX \cite{jax2018github} autodifferentiation and Flax \cite{flax2020github} neural network libraries.
The full source code used to run the experiments is provided in the supplementary material, and will be made publicly available on publication of this work.
In all MLP experiments we optimize our parameters via natural gradient descent with a learning rate of 0.1 and Tikhonov damping of magnitude 0.1.
In the MNIST classification experiments we use batches of size 1024 and a weight decay of rate 0.001.
We initialize our dense layers with the default initialization of Flax (LeCun Normal) \cite{LeCun2012},
and use a unit normal initialization for the parameters of our rational functions.

For the visualization experiments we use $\tau=1$, for the image classification experiments we use $\tau=$7e-3 and $\tau=$3e-2
for the whole dataset and variable subset experiments respectively.
Larger thresholds $\tau$ result in longer training times but more conservative network sizes and higher accuracy of $\eta$ estimates due to $\mF$ being a closer approximation to the curvature near convergence on the existing parameters.
Any extra costs are negligible for the visualization experiments, so we use the intuitive value of $1$,
but we choose $\tau$ values for the image classification experiments in light of this natural trade-off.
We use $\alpha=0.0025$ for all MLP experiments apart from the whole dataset MNIST classification, for which we use $\alpha=0.25$.
Here the latter choice compensates for larger noise in $\Delta \eta'$ introduced by use of a validation batch, as will be discussed shortly.
We adjust the expansion score increases for layer additions by a constant factor of $2$ in the visualization experiments and $60$ in the MNIST classification experiments.
These values are selected to be within an order of magnitude of the actual layer sizes expected in classification of a toy dataset versus MNIST, and so of the number of new neurons a new layer represents.

We calculate the natural gradient via the conjugate gradient method with a maximum iteration count of 100 when optimizing the existing parameters.
When optimizing the initializations of proposed neurons or layers we use the Kronecker factored approximation of the Fisher matrix for the relevant layer
based on derivatives of the predictions of the network as in \citet{DBLP:conf/icml/MartensG15}.
We compute $\Delta \eta'$ based on this $\Tilde{\mF}$ and normalize it with respect to the output gradient magnitudes of the particular task.
When comparing $\Delta \eta' / \eta_c$ to $\tau$ we use the $\eta_c$ value given by $\Tilde{\mF}$ for the layer in question.
When considering adding layers, we ensure new layers are invertible by adding a regularization term of $0.01(\ln\det \mW)^2$ when optimizing the initialization of their linear transform $\mW$,
and by setting the minimal singular values of $\mW$ to be at least 0.001 times its average singular value before adding the layer to the network.
In our visualization experiments we do not use batching, so we consider adding depth and width every 30 steps,
and add at most one layer per 90 steps.
In the MNIST classification experiments we use batching and so consider adding width and depth every 10 epochs,
adding at most one layer each time.

We use the same scheme for initializing proposed new neurons or layers as for initializing the starting network.
In our whole dataset MNIST classification experiment we then optimize proposal initializations to maximize $\Delta \eta'$ via 300 steps of vanilla gradient descent
on a fixed batch of 1024 images.
We consider 10000 neuron proposals and 100 layer proposals per location, and use a learning rate of 0.3,
reducing this by a factor of 3 as necessary to maintain monotonic improvement in $\Delta \eta'$ for each proposal.
We take the best proposal on this batch of size 1024 for each depth and width addition location,
and reevaluate its $\Delta \eta'$ on a fixed validation batch of size 1024 when deciding whether and where to add.
The variable degree of overfitting of the best proposal results in some noise in $\Delta \eta'$ at each location which we compensate for by choosing a relatively large $\alpha$.

For our other MLP experiments we optimize proposal initializations using 3000 steps of the Metropolis Adjusted Langevin Algorithm (MALA) \cite{girolami2011_rmhmc},
using a unit gaussian prior on initializations during these steps.
We use a temperature $T$ of 10 and an initial step size of 0.3, and adjust by a factor of 3 every 10 steps if necessary to maintain an acceptance rate of around 0.6.
We consider 100 width proposals and 100 layer proposals for each location,
and obtain 100 final MALA samples $i$ for each location width could be added and each location depth could be added.
We then construct a categorical distribution over each set of 100 samples via $\softmax (\frac{1}{T}\Delta \eta_i')$,
and use the corresponding expectation of $\Delta \eta'$ when deciding when and where to add capacity and whether it should be depth or width.
We draw initializations for new capacity from this categorical distribution,
except in the initial least squares regression experiment, where we use $\argmax_i \Delta \eta_i'$ over the 100 samples $i$ to make figure 2 more intuitive.

For the CNN experiments we use the Adam optimizer with a weight decay of 1e-2 and standard hyperparameters of $\beta_1=$1e-1 and $\beta_2=$1e-2 instead of natural gradient descent when optimizing the parameters for ease of comparison with existing results.
We use mini-batches of size 64, again for ease of comparison.
We use a standard Optax cosine annealing learning rate schedule which peaks at 10\% of each cycle with a learning rate of 3e-4 and ends at 3e-8.
We use a single cycle of 300 epochs in the first cifar10 experiment and 15 cycles of 40 epochs each for the second cyclic learning rate cifar10 experiment.
When pre-training on cifar10 before transfer learning we use 3 cycles of 40 epochs each, and when transferring to Tiny-Imagenet we use a single cycle of 100 epochs.

We use data augmentation for cifar10 and Tiny-Imagenet, after scaling the images to 32x32 for the pure cifar10 experiments, or 64x64 for the transfer learning experiments.
50\% of the time we use the image unchanged, and 50\% we augment it. If we augment it we follow the following procedure:
We first pad images to twice their original size, then randomly rotate by up to 20 degrees with probability 30\%, then randomly scale by 0.8-1.2 with probability 30\%, then finally do a random crop back to the original size (effectively including a random translation).
With probability 30\% we then augment the pixel values, changing saturation and contrast by 0.8-1.3 and changing brightness by up to 0.3.

For CNNs we use a block-diagonal Kronecker factored approximation \cite{DBLP:conf/icml/MartensG15} to a modified Generalized Gauss-Newton curvature approximation as described in appendix \ref{app:ubah}.
To facilitate this approximation we use swish activations functions over ReLU as they are second differentiable but otherwise similar.
We track the inverse curvature throughout training via rank one updates and a stochastic estimator as described in appendix \ref{app:rank_one_updates}.
(In places this requires a moving average, the exact time periods of which can be found in the attached source code.)
As JAX is just-in-time compiled dependent on parameter sizes, we allocate all layers with initial widths of $128$ features, only $8$ of which are initially actively used by the network when training from scratch with SENN.
We reuse the remaining features as proposals for new neurons, tracking their gradients and correlations with existing neurons as part of our standard curvature tracking code.
We initialize all neurons, used and unused, with the default Flax distribution (LeCun Normal), and reinitialize each unused neuron, with probability $10\%$ independently, at the end of each epoch.
We use the lower bound on $\Delta \eta$ when deciding whether to activate neurons, and use an expansion threshold of $\tau =$3e-4 for cifar10 and $\tau=$1e-4 for Tiny-Imagenet.
We activate neurons surpassing the expansion threshold with a probability of 1e-3 independently at every step, \ie{} in expectation after around one epoch.
We deactivate neurons whose removal cost drops below this same threshold at the same rate of 1e-3 per step.
This removal cost is calculated for an active neuron by using the curvature estimate to predict the gradient the resulting zeroed output weights would have were we to prune this neuron and optimally compensate with existing weights.
We then evaluate the increase in expansion score that would result from re-enabling this neuron in this counterfactual modified NN and call this counterfactual increase the ``removal cost'' of the neuron.
Clearly if we would immediately re-add the neuron were it to be pruned, we should not prune it, which is why it makes sense to compare this ``removal cost'' to the expansion threshold, and hence the title of section \ref{sec:pruning}.
We set the stopping criterion $\alpha$ to zero, as it was not needed in our CNN experiments - noise in the gradient estimates of the Adam optimizer is such that gradients do not approach zero excessively quickly.
When more than $50\%$ of neurons in a layer are used, we recompile the network with this layer $50\%$ larger with more unused neurons.
We prepend an unused layer of width $64$ before every active layer, and add this layer to the network when the average expansion score for its constituent neurons exceeds $3$ times the width threshold $\tau$.

\section{The consequences of non-constant curvature for total neurons added}





In section 3.4 we discussed the total number of neurons added during training, and in particular the extent to which we could provide bounds on this.
In the case where the Fisher $\mF$ is constant over training and exactly equal to the hessian,
the dynamics of training are very simple.
The loss $L$ has its global minimum at the point reached by a step of exactly $\mF^{-1} \vg$,
and it can be seen by integration that the reduction in loss due to such a step is exactly $\Delta L = \frac{1}{2} \vg^T \mF^{-1} \vg = \frac{1}{2}\eta$.
The stopping criterion $\alpha$ corresponds to the requirement that parameter expansions should enable a further reduction in loss of at least $\frac{1}{2} \alpha$.
Since $\eta \leq \lambda$ is bounded by $\lambda$, the maximum possible reduction in loss is $\Delta L_\text{max} = \frac{1}{2} \lambda$.
If we pessimistically assume that every parameter expansion enables the minimal loss reduction of only $\frac{1}{2} \alpha$,
then the total number of added neurons $N_T$ is still bounded by $N_T < \frac{\lambda}{\alpha}$.

The case where the true hessian of the loss $\mH$ is some constant multiple of the Fisher $\mH = \kappa \mF$ which is itself constant,
is almost as simple.
The parameters evolve along the same trajectory, only they move a factor of $\kappa$ faster than they would if $\mF = \mH$.
This also results in a rescaling $\eta = \kappa \eta_B$ of natural expansion scores relative to the baseline value $\eta_B$ in the case where $\mF$ was accurate.
While this has no effect on the behaviour of the expansion threshold $\tau$,
the inflated $\eta$ values mean that the effective value of $\alpha$ is reduced by a factor of $\kappa$
and so the total number of added neurons $N_T$ is now only bounded by $N_T < \kappa \frac{\lambda}{\alpha}$.

We will now try to describe the effect of more general failures of $\mF$ to represent the true curvature $\mH$.
Local expansion behaviour, i.e. without further parameter optimization, is bounded by lemma \ref{lemma:tau_rate_bound} of appendix \ref{app:proofs}.
Assuming the baseline case of $\mH = \mF$, we may substitute $\lambda = 2\Delta L_\text{max}$.
If we assume small step sizes, the rate of loss reduction $\Dot{L} = -\eta$ is given by the natural expansion score by definition,
regardless of $\mH$.
If at all times $t$ during training the rate of reduction of expansion score $-\Dot{\eta}(t) < -\Dot{\eta}_B(t)$ is lower than the baseline scenario,
then $\eta$ will at all times be greater than expected.
Since the rate of loss reduction $\Dot{L} = \eta$ is given by $\eta$, $L$ will decrease faster than expected and the remaining maximum possible loss reduction $\Delta L_\text{max}$ will be at all times less than expected.
It can be seen from lemma \ref{lemma:tau_rate_bound} that discrepancies in these directions relative to baseline will result in fewer additions being made.

We now only need to establish conditions under which the actual rate of reduction in $\eta$ is lower than the expected rate.
The rate of change during optimization (indicated by overdot) of the various components of $\eta$ can be described as follows:
\begin{align}
    \Dot{\vtheta} & = -\mF^{-1} \vg \\
    \Dot{\vg} & = \mH \Dot{\vtheta} = -\mH \mF^{-1} \vg \\
    \Dot{\vg}^T & \mF^{-1} \vg  = -\vg^T \mF^{-1} \mH \mF^{-1} \vg \\
    \Dot{\eta} &  = -\Dot{\vg}^T \mF^{-1} \vg - \vg^T \mF^{-1} \Dot{\vg} - \vg^T \mF^{-1} \Dot{\mF} \mF^{-1} \vg \\
    & = -\vg^T \mF^{-1} \left( 2\mH + \Dot{\mF} \right) \mF^{-1} \vg
\end{align}
Since in the base case $\mH_B = \mF$ and $\Dot{\mF}_B = 0$, we have that
if $\mH + \frac{1}{2} \Dot{\mF} \preceq \mF$
then $-\Dot{\eta} \leq -\Dot{\eta}_B$.
Putting the above results together, we have that if at all times during training $\mH + \frac{1}{2} \Dot{\mF} \preceq \mF$,
then the bound on total additions $N_T < \frac{\lambda}{\alpha}$ should hold.
Incorporating the previous result regarding $\mH = \kappa \mF$,
it also appears that if at all times $\mH + \frac{1}{2}\Dot{\mF} \preceq \kappa \mF$, then $N_T < \kappa \frac{\lambda}{\alpha}$.
Assuming $\mF$ positive definite and the loss surface smooth (i.e. $\mH$ and $\Dot{\mF}$ finite),
then there will exist some finite $\kappa$ for which the condition holds and so $N_T$ will be bounded.





\section{Accounting for Activation Function Curvature}
\label{app:ubah}
The Fisher matrix and its standard variants (empirical fisher, euclidean metric in output space, Fisher metric in output space, Generalized Gauss-Newton (GGN), etc.)
are all of the form $\mJ^T \mH_y \mJ$, where $\mJ$ is the parameters-to-outputs jacobian and $\mH_y$ is some symmetric positive definite matrix in the output space.
If the loss function is convex then the GGN (which uses the exact loss function curvature as $\mH_y$) captures all curvature originating here, but it implicitly linearizes the network itself through use of $\mJ$.
We noted at the end of 3.4 that we sometimes use a modification to the Fisher/GGN which captures more of the total curvature, which we will now describe.
Consider a single layer perceptron $\mathcal{L} \circ \mW_2 \circ \sigma \circ \mW_1: \vx \rightarrow \mathbb{R}$, with output $\vy$, pre-activations $\vp$, and hidden activations $\vh$.
It consists of a convex loss function $\mathcal{L}: \vy \rightarrow \mathbb{R}$, linear functions $\mW_1: \vx \rightarrow \vp$ and $\mW_2: \vh \rightarrow \vy$, and a nonlinearity $\sigma: \vp \rightarrow \vh$ which we assume to be a second differentiable function $\mathbb{R} \rightarrow \mathbb{R}$ vectorized over $\vp$.

If we are interested in approximating the second derivative with respect to $\vx$, the GGN in this case is given by $\mW_2^T \sigma' \mW_1^T \mH_y \mW_1 \sigma' \mW_2$, where $\mH_y = \partial^2 \mathcal{L}/\partial \vy^2$ is the (positive semi-definite) hessian of $\mathcal{L}$ and $\sigma'$ is the (diagonal) derivative of $\sigma$.
If $\sigma$ were linear, i.e. $\sigma'' = 0$, this would be exact, but the full curvature with respect to $\vx$ in general includes more terms.
In particular,
\begin{equation}
\mH_x := \frac{\partial^2 \mathcal{L}}{\partial \vx^2} = \mW_1^T \sigma' \mW_2^T \mH_y \mW_2 \sigma' \mW_1 + \mW_1^T \texttt{diag}(\frac{\partial \mathcal{L}}{\partial \vh} \odot \sigma'') \mW_1.
\label{eq:hess_slp_x}
\end{equation}
where $\texttt{diag}(\frac{\partial \mathcal{L}}{\partial \vh} \odot \sigma'')$ is the diagonal matrix with diagonal entries given by the elementwise product of the gradient with respect to the hidden activations $\vh$ and the second derivative of $\sigma$ for each hidden feature $\sigma''$.

Generalizing the above to an MLP or a CNN, the curvature $\mH_x$ is given by a sum over all evaluations of $\sigma$ during the forward pass for each layer:
\begin{equation}
    \mH_x = \mJ_{yx}^T\mH_y \mJ_{yx} + \sum_l \sum_i g_i \sigma''(p_i) \mJ_{ix}^T \mJ_{ix}
\end{equation}
where $g_i = \frac{\partial \mathcal{L}}{\partial h_i}$ is the (scalar) gradient of the loss with respect to $h_i$, $h_i$ and $p_i$ are the (scalar) post and pre-activations respectively for the $i$th evaluation of $\sigma$ in layer $l$, $\frac{\partial \mathcal{L}}{\partial h_i}$ is the gradient with respect to the post-activation $i$, $\mJ_{ix} \in \mathbb{R}^{1 \times \texttt{dim}(\vx)}$ is the (vector-valued) jacobian of $p_i$ with respect to $\vx$, and $\mJ_{yx}$ is the normal (matrix) jacobian of $\vy$ with respect to $\vx$.
It can be seen that the extra curvature is given by a weighted sum of vector outer products $\mJ_{ix}^T \mJ_{ix}$, one for each evaluation of $\sigma$.
While outer products are always positive semi-definite, $\mH_x$ is not because the product $g_i \sigma''(p_i)$ is not always positive.
We therefore replace each $g_i \sigma''(p_i)$ with the upper bound given by the absolute value $|g_i \sigma''(p_i)|$ and obtain the corresponding upper bound on $\mH_x$:
\begin{equation}
    \mJ_{yx}^T \mH_y \mJ_{yx} + \sum_l \sum_i |g_i \sigma''(p_i)| \mJ_{ix}^T \mJ_{ix} \succeq \mH_x.
    \label{eq:ubah_x}
\end{equation}

We have described an upper bound on the hessian with respect to the input of a NN.
This clearly also gives an upper bound on the hessian with respect to the intermediate activations by truncating the network.
We are actually interested in the curvature with respect to the parameters $\vtheta$, and this may be obtained by substituting $\mJ_{i\theta}$ for $\mJ_{ix}$ and $\mJ_{y\theta}$ for $\mJ_{yx}$ in equation \ref{eq:ubah_x} above, where $\mJ_{i\theta}$ and $\mJ_{y\theta}$ are respectively the jacobians of the $i$th pre-activation and the output $\vy$ with respect to $\vtheta$.
Due to the second derivative of the composite function of two linear layers $\mW_a \circ \mW_b$ with respect to the parameters of $\mW_a$ and $\mW_b$ being non-zero, we have still neglected some terms connecting the different layers, but we use a block-diagonal KFAC approximation anyway, so this does not matter.
We have therefore accounted for the extra curvature terms neglected by the GGN in our new approximation.
In particular, as one goes earlier into the network, there are more and more nonlinear activation functions between the parameters and the loss function,
so the GGN can systematically underestimate the magnitude of the curvature for earlier layers.
In SENN, this would result in excessive capacity addition to earlier layers, the avoidance of which motivates our use of this alternative.

The reader may note that we have very many outer products in our sum (equation \ref{eq:ubah_x}) and worry that this makes our approximation computationally intractable.
Naively for a CNN one would have to compute an outer product of a parameters-sized vector for every channel and pixel of every layer for every data item.
Fortunately, it is possible to use a stochastic estimator for the Fisher which has constant time scaling in the number of network outputs.
We use a similar trick to obtain an estimator for our curvature with a cost only about a factor of two higher.
For full details, see the attached implementation of our experiments, but we will now summarize.

If $\mJ_{y\theta}^T \mH_y \mJ_{y\theta}$ is the GGN and $\mL$ is a cholesky factorization of the positive definite output curvature, $\mL \mL^T = \mH_y$,
then we can construct a stochastic estimator of the GGN via vector jacobian products.
Specifically, if $\xi \sim \mathcal{N}(\mathbf{0}, \mI)$ is a normally distributed vector of dimension equal to $\vy$,
then let $\vv = \mJ_{y\theta}^T \mL \xi$.
It can be seen that $\vv \sim \mathcal{N}(\mathbf{0}, \mJ_{y\theta}^T\mL \mI \mL^T \mJ_{y\theta})$ has covariance equal to the GGN.
The GGN may therefore be approximated by taking $N$ samples of $\vv$ for a total cost of $N$ backpropagations, independently of the dimension of $\vy$ (assuming the factor $\mL$ already given).
In our case we do the same thing, but additionally introduce random gradient covectors $\zeta_i \sim \mathcal{N}(0, \sqrt{|g_i \sigma''(p_i)|})$ during backpropagation at each activation function evaluation $i$ when computing the vector jacobian product.
This depends on the local gradient $g_i$ which must first be computed via backpropagation, but at worst this only results in two backpropagation evaluations,
and in fact can be implemented as a modified hessian vector product in JAX, see the included implementation code.

\section{Rank One Updates}
\label{app:rank_one_updates}
While it is well known that the inverse of a matrix may be efficiently tracked by rank one updates,
we will show in this appendix how the inverse square root may be tracked similarly with only matrix vector products, and how this allows the maintenance of a kronecker factored approximation to the inverse square root of some positive semi-definite curvature approximation $\mF$.
Assume we have a symmetric positive definite matrix $\mA$ and its inverse $\mA^{-1}$.
If we make a rank one update to $\mA$ by adding some outer product $\va \va^T$ then we may cheaply update our stored inverse $\mA^{-1}$ by the Sherman-Morrison formula:
\begin{equation}
    \label{eqn:sherman}
    (\mA + \va \va^T)^{-1} = \mA^{-1} - \frac{\mA^{-1}\va \va^T \mA^{-1}}{1 + \va^T \mA^{-1} \va}
\end{equation}

While direct inversion of the new matrix is $\mathcal{O}(N^3)$, this update is merely $\mathcal{O}(N^2)$.
It is often advantageous to track the square root of a matrix or its inverse square root, for example via the cholesky decomposition of a postitive definite matrix.
Unfortunately, while there does exist an $\mathcal{O}(N^2)$ rank one update to the cholesky factor, this update is not composed of matrix-vector products and is therefore not easy to vectorize on a GPU.
We therefore note the existence of a modification to the Sherman-Morrison formula for a rank one update $\mL'$ such that $\mL' \mL'^T = (\mA + \va \va^T)^{-1}$ to some existing inverse square root $\mL$ with $\mL \mL^T = \mA^{-1}$:
\begin{equation}
    \label{eqn:rank_one_iroot}
    \mL' = \mL +
    \frac{\sqrt{1 - \frac{\mu}{1+\mu}}-1}{\mu}
    \mL \mL^T \va \va^T \mL
\end{equation}
where
$\mu = \va^T \mL \mL^T \va$.
Here, the coefficient can be found as the solution to a quadratic equation after forming the product $\mL' \mL'^T$ and comparing coefficients to the Sherman-Morrison formula.
There are many square roots of $\mA$ and this formula does not find some specific unique square root, unlike the cholesky factorization, but it is composed of only matrix-vector products and therefore vectorizes easily.

From appendix \ref{app:ubah} we have a cheap stochastic estimator producing sample vectors $\vv \sim \mathcal{N}(\mathbf{0}, \mF)$ distributed according to some curvature approximation $\mF$.
Combined with a simple exponential moving average these provide a good estimate of $\mF$.
We then simply apply equation $\ref{eqn:rank_one_iroot}$ to track the inverse root of $\mF$ and therefore also its full inverse.
If one tracks $\mF$ via the obvious means of tracking outer products then one also has the simple square root of $\mF$ in combination with the above.

In order to track a KFAC approximation to $\mF$ one further splits the samples $\vv$ into stochastic estimates of the factors $\mA$ and $\mS$.
For example, let $\mV$ be some sample $\vv$ restricted to the parameters of a single linear transform $\mW \in \mathbb{R}^{M\times N}$ and interpreted as a matrix.
Then $\vs = \frac{1}{N}\mV \xi$ with $\xi \sim \mathcal{N}(\mathbf{0}, \mI_N)$ has outer products with expectation $\mS$, and similarly $\va = \frac{1}{M} \mV^T \zeta$ with $\zeta \sim \mathcal{N}(\mathbf{0}, \mI_M)$ has outer products with expectation $\mA$.

Concretely, given a sample $\vv$ with covariance $\mF$ we split it into chunks for each layer separately, and then do the following for each: first update an estimate for the inverse root of $\mA$, then for the inverse root of $\mS$, and then finally estimate a diagonal variance in the resulting kronecker factored whitened basis.
To ensure all eigenvalues remain finite and reasonably sized we additionally add a unit covariance diagonal element to our curvature estimate $\mF$ by adding appropriately scaled unit normal vectors to each sample $\vv$.

While we have the full factor $\mA$ for all neurons in the network, used or unused, we track the inverse root for the subset of $\mA$ corresponding to the used neurons.
The other neurons are proposals, and when they are activated this subset of $\mA$ increases.
Clearly, under these circumstances, the inverse root must also be updated.
There is a simple and reliable way to do this: compute the inverse cholesky factor of the new subset of $\mA$ and set $\mL$ to this.
Since it is easy to verify whether some $\mL$ is a correct inverse root of $\mA$ by comparing $\mL^T \mA \mL$ to $\mI$,
we in fact use various approximate rank one updates to $\mL$ and fall back on the cholesky factorization only when the error grows large.
Overall, our use of the KFAC approximation is a natural fit for SENN, because the addition or removal of a neuron only touches one of the factors at a time,
and there are many opportunities to make such incremental updates efficient.

\end{document}